\def\eqref#1{equation~\ref{#1}}
\def\1{\bm{1}}
\DeclareMathAlphabet{\mathsfit}{\encodingdefault}{\sfdefault}{m}{sl}
\SetMathAlphabet{\mathsfit}{bold}{\encodingdefault}{\sfdefault}{bx}{n}
\newtheorem{theorem}{Theorem}
\newtheorem{lemma}[theorem]{Lemma}
\newacronym{rl}{RL}{Reinforcement Learning}
\newacronym{drl}{DRL}{Deep Reinforcement Learning}
\newacronym{mtl}{MTL}{Multi-Task Learning}
\newacronym{mtrl}{MTRL}{Multi-Task Reinforcement Learning}
\newacronym{mtdrl}{MTDRL}{Multi-Task Deep Reinforcement Learning}
\newacronym{avi}{AVI}{Approximate Value-Iteration}
\newacronym{api}{API}{Approximate Policy-Iteration}
\newacronym{tl}{TL}{Transfer Learning}
\newacronym{ml}{ML}{Machine Learning}
\newacronym[plural=MDPs, firstplural=Markov Decision Processes (MDPs)]{mdp}{MDP}{Markov Decision Process}
\newacronym{dl}{DL}{Deep Learning}
\newacronym{dqn}{DQN}{Deep $Q$-Network}
\newacronym{ddpg}{DDPG}{Deep Deterministic Policy Gradient}
\newacronym{pac}{PAC}{Probably Approximately Correct}
\newacronym{mdqn}{MDQN}{Multi Deep $Q$-Network}
\newacronym{mddpg}{MDDPG}{Multi Deep Deterministic Policy Gradient}
\newacronym{fqi}{FQI}{Fitted $Q$-Iteration}
\newacronym{mfqi}{MFQI}{Multi Fitted $Q$-Iteration}
\title{Sharing Knowledge in Multi-Task \\ Deep Reinforcement Learning}
\author{Carlo D'Eramo \& Davide Tateo \\
Department of Computer Science\\
TU Darmstadt, IAS\\
Hochschulstraße 10, 64289, Darmstadt, Germany \\
\texttt{\{carlo.deramo,davide.tateo\}@tu-darmstadt.de} \\
\And
Andrea Bonarini \& Marcello Restelli \\
Politecnico di Milano, DEIB \\
Piazza Leonardo da Vinci 32, 20133, Milano \\
\texttt{\{andrea.bonarini,marcello.restelli\}@polimi.it} \\
\AND
Jan Peters \\
TU Darmstadt, IAS\\
Hochschulstraße 10, 64289, Darmstadt, Germany \\
Max Planck Institute for Intelligent Systems \\
Max-Planck-Ring 4, 72076, T\"{u}bingen, Germany \\
\texttt{jan.peters@tu-darmstadt.de}
}
\begin{document}

\maketitle

\begin{abstract}
We study the benefit of sharing representations among tasks to enable the effective use of deep neural networks in Multi-Task Reinforcement Learning. We leverage the assumption that learning from different tasks, sharing common properties, is helpful to generalize the knowledge of them resulting in a more effective feature extraction compared to learning a single task. Intuitively, the resulting set of features offers performance benefits when used by Reinforcement Learning algorithms. We prove this by providing theoretical guarantees that highlight the conditions for which is convenient to share representations among tasks, extending the well-known finite-time bounds of Approximate Value-Iteration to the multi-task setting. In addition, we complement our analysis by proposing multi-task extensions of three Reinforcement Learning algorithms that we empirically evaluate on widely used Reinforcement Learning benchmarks showing significant improvements over the single-task counterparts in terms of sample efficiency and performance.
\end{abstract}

\section{Introduction}\label{S:intro}
\gls{mtl} ambitiously aims to learn multiple tasks jointly instead of learning them separately, leveraging the assumption that the considered tasks have common properties which can be exploited by \gls{ml} models to generalize the learning of each of them. For instance, the features extracted in the hidden layers of a neural network trained on multiple tasks have the advantage of being a general representation of structures common to each other. This translates into an effective way of learning multiple tasks at the same time, but it can also improve the learning of each individual task compared to learning them separately~\citep{caruana1997multitask}. Furthermore, the learned representation can be used to perform \gls{tl}, i.e. using it as a preliminary knowledge to learn a new similar task resulting in a more effective and faster learning than learning the new task from scratch~\citep{baxter2000model, thrun2012learning}.

The same benefits of extraction and exploitation of common features among the tasks achieved in \gls{mtl}, can be obtained in \gls{mtrl} when training a single agent on multiple \gls{rl} problems with common structures~\citep{taylor2009transfer, lazaric2012transfer}. In particular, in \gls{mtrl} an agent can be trained on multiple tasks in the same domain, e.g. riding a bicycle or cycling while going towards a goal, or on different but similar domains, e.g. balancing a pendulum or balancing a double pendulum\footnote{For simplicity, in this paper we refer to the concepts of \textit{task} and \textit{domain} interchangeably.}. Considering recent advances in \gls{drl} and the resulting increase in the complexity of experimental benchmarks, the use of \gls{dl} models, e.g. deep neural networks, has become a popular and effective way to extract common features among tasks in \gls{mtrl} algorithms~\citep{rusu2015policy, liu2016decoding, higgins2017darla}. However, despite the high representational capacity of \gls{dl} models, the extraction of good features remains challenging. For instance, the performance of the learning process can degrade when unrelated tasks are used together~\citep{caruana1997multitask, baxter2000model}; another detrimental issue may occur when the training of a single model is not balanced properly among multiple tasks~\citep{hessel2018multi}.

Recent developments in \gls{mtrl} achieve significant results in feature extraction by means of algorithms specifically developed to address these issues. While some of these works rely on a single deep neural network to model the multi-task agent~\citep{liu2016decoding, yang2017multi, hessel2018multi,wulfmeier2019regularized}, others use multiple deep neural networks, e.g. one for each task and another for the multi-task agent~\citep{rusu2015policy, parisotto2015actor, higgins2017darla, teh2017distral}. Intuitively, achieving good results in \gls{mtrl} with a single deep neural network is more desirable than using many of them, since the training time is likely much less and the whole architecture is easier to implement. In this paper we study the benefits of shared representations among tasks. We theoretically motivate the intuitive effectiveness of our method, deriving theoretical guarantees that exploit the theoretical framework provided by~\cite{maurer2016benefit}, in which the authors present upper bounds on the quality of learning in \gls{mtl} when extracting features for multiple tasks in a single shared representation. The significancy of this result is that the cost of learning the shared representation decreases with a factor $\mathcal{O}(\nicefrac{1}{\sqrt{T}})$, where $T$ is the number of tasks for many function approximator hypothesis classes. The main \textit{contribution} of this work is twofold.
\begin{enumerate}
\item We derive upper confidence bounds for \gls{avi} and \gls{api}\footnote{All proofs and the theorem for \gls{api} are in Appendix~\ref{A:api_bound}.}~\citep{farahmand2011regularization} in the \gls{mtrl} setting, and we extend the approximation error bounds in~\cite{maurer2016benefit} to the case of multiple tasks with different dimensionalities. Then, we show how to combine these results resulting in, to the best of our knowledge, the first proposed extension of the finite-time bounds of \gls{avi}/\gls{api} to \gls{mtrl}. Despite being an extension of previous works, we derive these results to justify our approach showing how the error propagation in \gls{avi}/\gls{api} can theoretically benefit from learning multiple tasks jointly.
\item We leverage these results proposing a neural network architecture, for which these bounds hold with minor assumptions, that allow us to learn multiple tasks with a single regressor extracting a common representation. We show an empirical evidence of the consequence of our bounds by means of a variant of \gls{fqi}~\citep{ernst2005tree}, based on our shared network and for which our bounds apply, that we call \gls{mfqi}. Then, we perform an empirical evaluation in challenging \gls{rl} problems proposing multi-task variants of the \gls{dqn}~\citep{mnih2015human} and \gls{ddpg}~\citep{lillicrap2015continuous} algorithms. These algorithms are practical implementations of the more general \gls{avi}/\gls{api} framework, designed to solve complex problems. In this case, the bounds apply to these algorithms only with some assumptions, e.g. stationary sampling distribution. The outcome of the empirical analysis joins the theoretical results, showing significant performance improvements compared to the single-task version of the algorithms in various \gls{rl} problems, including several MuJoCo~\citep{todorov2012mujoco} domains.
\end{enumerate}

\section{Preliminaries}
Let $B(\mathcal{X})$ be the space of bounded measurable functions w.r.t. the $\sigma$-algebra $\sigma_{\mathcal{X}}$, and similarly $B(\mathcal{X}, L)$ be the same bounded by $L < \infty$.

A \gls{mdp} is defined as a $5$-tuple $\mathcal{M} = <\mathcal{S}, \mathcal{A}, \mathcal{P}, \mathcal{R}, \gamma>$, where $\mathcal{S}$ is the state space, $\mathcal{A}$ is the action space, $\mathcal{P}: \mathcal{S} \times \mathcal{A} \to \mathcal{S}$ is the transition distribution where $\mathcal{P}(s' | s, a)$ is the probability of reaching state $s'$ when performing action $a$ in state $s$, $\mathcal{R}: \mathcal{S} \times \mathcal{A} \times \mathcal{S} \to \mathbb{R}$ is the reward function, and $\gamma \in (0,1]$ is the discount factor. A \textit{deterministic policy} $\pi$ maps, for each state, the action to perform: $\pi:\mathcal{S} \to \mathcal{A}$. Given a policy $\pi$, the value of an action $a$ in a state $s$ represents the expected discounted cumulative reward obtained by performing $a$ in $s$ and following $\pi$ thereafter: $Q^\pi(s,a) \triangleq \mathbb{E}[\sum^\infty_{k=0} \gamma^k r_{i+k+1}|s_i=s, a_i=a, \pi]$, where $r_{i+1}$ is the reward obtained after the $i$-th transition. The expected discounted cumulative reward is maximized by following the \textit{optimal} policy $\pi^*$ which is the one that determines the optimal action values, i.e., the ones that satisfy the Bellman optimality equation \citep{bellman1954theory}: $Q^*(s,a) \triangleq \int_{\mathcal{S}} \mathcal{P}(s'|s,a) \left[\mathcal{R}(s,a,s') + \gamma \max_{a'} Q^*(s',a')\right] ds'$. The solution of the Bellman optimality equation is the fixed point of the optimal Bellman operator $\mathcal{T}^* : B(\mathcal{S} \times \mathcal{A}) \to B(\mathcal{S} \times \mathcal{A})$ defined as $(\mathcal{T}^*Q)(s,a) \triangleq \int_\mathcal{S} \mathcal{P}(s'|s,a) [\mathcal{R}(s,a,s') + \gamma \max_{a'} Q(s',a')] ds'$.
In the \gls{mtrl} setting, there are multiple \glspl{mdp} $\mathcal{M}^{(t)} = <\mathcal{S}^{(t)}, \mathcal{A}^{(t)}, \mathcal{P}^{(t)}, \mathcal{R}^{(t)}, \gamma^{(t)}>$ where $t \in \lbrace 1, \dots, T \rbrace$ and $T$ is the number of \glspl{mdp}. For each \gls{mdp} $\mathcal{M}^{(t)}$, a deterministic policy $\pi_t : \mathcal{S}^{(t)} \to \mathcal{A}^{(t)}$ induces an action-value function $Q_t^{\pi_t}(s^{(t)}, a^{(t)}) = \mathbb{E}[\sum^\infty_{k=0} \gamma^k r^{(t)}_{i+k+1}|s_i=s^{(t)}, a_i=a^{(t)}, \pi_t]$. In this setting, the goal is to maximize the sum of the expected cumulative discounted reward of each task.

In our theoretical analysis of the \gls{mtrl} problem, the complexity of representation plays a central role. As done in~\citet{maurer2016benefit}, we consider the Gaussian complexity, a variant of the well-known Rademacher complexity, to measure the complexity of the representation. Given a set $\mathbf{\bar{X}} \in \mathcal{X}^{Tn}$ of $n$ input samples for each task $t \in \lbrace 1,\dots,T\rbrace$, and a class $\mathcal{H}$ composed of $k \in \lbrace 1,\dots,K\rbrace$ functions, the Gaussian complexity of a random set $\mathcal{H}(\mathbf{\bar{X}})=\left\{(h_k(X_{ti}))\, : h\in\mathcal{H}\right\}\subseteq \mathbb{R}^{KTn}$ is defined as follows:
\begin{equation}
 G(\mathcal{H}(\mathbf{\bar{X}}))=\mathbb{E}\left[\sup_{h\in\mathcal{H}}\sum_{tki}\gamma_{tki}h_k(X_{ti}) \middle| X_{ti} \right],
\end{equation}
where $\gamma_{tki}$ are independent standard normal variables. We also need to define the following quantity, taken from~\cite{maurer2016chain}: let $\boldsymbol{\gamma}$ be a vector of $m$ random standard normal variables, and $f\in\mathcal{F}: Y \to\mathbb{R}^m$, with $Y \subseteq \mathbb{R}^n$, we define
\begin{equation}
O(\mathcal{F}) = \sup_{y, y' \in Y, y \neq y'} \mathbb{E}\left[\sup_{f \in \mathcal{F}}\dfrac{\langle\boldsymbol{\gamma}, f(y) - f(y')\rangle}{\lVert y - y' \rVert}\right].\label{E:L_quotient}
\end{equation}
Equation~\ref{E:L_quotient} can be viewed as a Gaussian average of Lipschitz quotients, and appears in the bounds provided in this work.
Finally, we define $L(\mathcal{F})$ as the upper bound of the Lipschitz constant of all the functions $f$ in the function class $\mathcal{F}$.

\section{Theoretical analysis}\label{S:bounds}
The following theoretical study starts from the derivation of theoretical guarantees for \gls{mtrl} in the \gls{avi} framework, extending the results of~\cite{farahmand2011regularization} in the \gls{mtrl} scenario. Then, to bound the approximation error term in the \gls{avi} bound, we extend the result described in~\cite{maurer2006bounds} to \gls{mtrl}. As we discuss, the resulting bounds described in this section clearly show the benefit of sharing representation in \gls{mtrl}. To the best of our knowledge, this is the first general result for \gls{mtrl}; previous works have focused on finite \glspl{mdp}~\citep{brunskill2013sample} or linear models~\citep{lazaric2011transfer}.

\subsection{Multi-task representation learning}
The multi-task representation learning problem consists in learning simultaneously a set of $T$ tasks $\mu_t$, modeled as probability measures over the space of the possible input-output pairs $(x, y)$, with $x\in\mathcal{X}$ and $y\in\mathbb{R}$, being $\mathcal{X}$ the input space.
Let $w\in\mathcal{W}:\mathcal{X} \to \mathbb{R}^J$, $h\in\mathcal{H}:\mathbb{R}^J \to \mathbb{R}^K$ and $f\in\mathcal{F}:\mathbb{R}^K \to \mathbb{R}$ be functions chosen from their respective hypothesis classes. The functions in the hypothesis classes must be Lipschitz continuous functions.
Let $\mathbf{\bar{Z}}=(\mathbf{Z}_1,\dots,\mathbf{Z}_T)$ be the multi-sample over the set of tasks $\boldsymbol{\mu}=(\mu_1,\dots,\mu_T)$, where $\mathbf{Z}_t=(Z_{t1},\dots, Z_{tn})\sim\mu_t^n$ and $Z_{ti}=(X_{ti}, Y_{ti})\sim \mu_t$.
We can formalize our regression problem as the following minimization problem:
\begin{align}
 \min \left\{ \dfrac{1}{nT}\sum_{t=1}^{T}\sum_{i=1}^{N}\ell(f_t(h(w_t(X_{ti}))), Y_{ti}): \mathbf{f}\in\mathcal{F}^T, h\in\mathcal{H}, \mathbf{w}\in\mathcal{W}^T \right\},\label{E:mtrl}
\end{align}
where we use $\mathbf{f}=(f_1,\dots, f_T)$, $\mathbf{w}=(w_1,\dots, w_T)$, and define the minimizers of Equation~(\ref{E:mtrl}) as $\mathbf{\hat{w}}$, $\hat{h}$, and $\mathbf{\hat{f}}$.
We assume that the loss function $\ell: \mathbb{R}\times\mathbb{R}\to[0,1]$ is 1-Lipschitz in the first argument for every value of the second argument. While this assumption may seem restrictive, the result obtained can be easily scaled to the general case. To use the principal result of this section, for a generic loss function $\ell'$, it is possible to use $\ell(\cdot) = \nicefrac{\ell'(\cdot)}{\epsilon_{\text{max}}}$, where $\epsilon_{\text{max}}$ is the maximum value of $\ell'$.
The expected loss over the tasks, given $\mathbf{w}$, $h$ and $\mathbf{f}$ is the task-averaged risk:
\begin{equation}
 \varepsilon_{\text{avg}}(\mathbf{w}, h, \mathbf{f}) = \dfrac{1}{T} \sum_{t=1}^T \mathbb{E}\left[ \ell(f_t(h(w_t(X))), Y) \right]
\end{equation}
The minimum task-averaged risk, given the set of tasks $\boldsymbol{\mu}$ and the hypothesis classes $\mathcal{W}$, $\mathcal{H}$ and $\mathcal{F}$ is $\varepsilon_{\text{avg}}^*$, and the corresponding minimizers are $\mathbf{w}^*$, $h^*$ and $\mathbf{f}^*$. 

\subsection{Multi-task Approximate Value Iteration bound}
We start by considering the bound for the \gls{avi} framework which applies for the single-task scenario.

\begin{theorem}(Theorem 3.4 of~\cite{farahmand2011regularization})
  Let K be a positive integer, and $Q_{\text{max}} \leq \frac{R_{\text{max}}}{1-\gamma}$. Then for any sequence $(Q_k )^K_{k=0}\subset B(\mathcal{S}\times\mathcal{A}, Q_{\text{max}})$ and the corresponding sequence $(\varepsilon_k)_{k=0}^{K-1}$, where $\varepsilon_k=\lVert Q_{k+1} - \mathcal{T}^*Q_{k} \rVert^2_\nu$, we have:
  \begin{align}
    \lVert Q^* - Q^{\pi_K}\rVert_{1,\rho} \leq \dfrac{2\gamma}{(1-\gamma)^2}\left[\inf_{r \in [0,1]} C^{\frac{1}{2}}_{\text{VI},\rho,\nu}(K;r)\mathcal{E}^{\frac{1}{2}}(\varepsilon_{0}, \dots, \varepsilon_{K-1};r)+\dfrac{2}{1 - \gamma}\gamma^K R_{\text{max}}\right],\label{E:farahmand_avi}
  \end{align}
  where
  \begin{align}
     C_{\text{VI},\rho,\nu}(K;r) = \left(\dfrac{1-\gamma}{2}\right)^2\sup_{\pi'_1,\dots,\pi'_K}\sum^{K-1}_{k=0}a_k^{2(1-r)}
 &\left[\sum_{m \geq 0}\gamma^m\Big(c_{\text{VI}_1,\rho,\nu}(m,K-k;\pi'_K)\right.\nonumber\\
 &\left.+ c_{\text{VI}_2,\rho,\nu}(m+1;\pi'_{k+1},\dots,\pi'_{K})\Big)\vphantom{\sum_{m \geq 0}}\right]^2,
  \end{align}
  with $\mathcal{E}(\varepsilon_{0}, \dots, \varepsilon_{K-1};r)=\sum^{K-1}_{k=0}\alpha_k^{2r}\varepsilon_{k}$, the two coefficients $c_{\text{VI}_1,\rho,\nu}$, $c_{\text{VI}_2,\rho,\nu}$, the distributions $\rho$ and $\nu$, and the series $\alpha_k$ are defined as in~\cite{farahmand2011regularization}\label{T:farahmand_avi}.
\end{theorem}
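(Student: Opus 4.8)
The statement is a verbatim restatement of Theorem 3.4 of \cite{farahmand2011regularization}, so the plan is to reproduce that argument; I sketch its skeleton. Write $\pi_k$ for a policy greedy with respect to $Q_k$ and let $e_k = Q_{k+1} - \mathcal{T}^* Q_k$ be the $k$-th Bellman residual (so $\varepsilon_k = \lVert e_k\rVert_\nu^2$). The first step is a \emph{pointwise} error-propagation inequality: expanding $Q^* - Q^{\pi_K}$ by repeatedly inserting the operators $\mathcal{T}^*$, $\mathcal{T}^{\pi^*}$ and $\mathcal{T}^{\pi_k}$, using $\mathcal{T}^* Q \ge \mathcal{T}^{\pi^*} Q$ and $\mathcal{T}^* Q_k = \mathcal{T}^{\pi_{k+1}} Q_k$, and telescoping over $k=0,\dots,K-1$, one obtains an inequality of the form
\[
|Q^* - Q^{\pi_K}| \;\le\; \frac{2\gamma(1-\gamma^{K+1})}{(1-\gamma)^2}\left[\sum_{k=0}^{K-1}\alpha_k\,\mathcal{A}_k\,|e_k| \;+\; \alpha_K\,\mathcal{A}_K\,R_{\max}\mathbf{1}\right],
\]
where each $\mathcal{A}_k$ is a linear operator equal to a $\gamma$-weighted convex combination of products of transition kernels, and the two distinct ``families'' of such products --- one arising from the comparison against $\pi^*$, one from the sequence of greedy policies $\pi_{k+1},\dots,\pi_K$ --- are precisely what later give rise to the coefficients $c_{\text{VI}_1,\rho,\nu}$ and $c_{\text{VI}_2,\rho,\nu}$; the $\alpha_k$ are the normalized geometric weights of \cite{farahmand2011regularization}.

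Second, I would take the $\rho$-weighted $L^1$ norm on both sides. For each $k$, $\lVert \mathcal{A}_k |e_k|\rVert_{1,\rho}$ unfolds into a series whose $m$-th term integrates $|e_k|$ against the measure obtained by pushing $\rho$ forward through $m$ (and related) transition kernels; bounding the Radon--Nikodym derivative of that measure with respect to the sampling distribution $\nu$ by $c_{\text{VI}_1,\rho,\nu}(m,K-k;\pi'_K)$ and $c_{\text{VI}_2,\rho,\nu}(m+1;\pi'_{k+1},\dots,\pi'_{K})$ turns each term into a constant times $\lVert e_k\rVert_{1,\nu}$, hence (since $\nu$ is a probability measure, $\lVert e_k\rVert_{1,\nu}\le\lVert e_k\rVert_{2,\nu}=\varepsilon_k^{1/2}$) into a constant times $\varepsilon_k^{1/2}$. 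The $\alpha_K$-term produces the additive $\frac{2}{1-\gamma}\gamma^K R_{\max}$.

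Third, the bound now has the shape $\sum_{k=0}^{K-1}\alpha_k\,b_k\,\varepsilon_k^{1/2}$, with $b_k$ gathering the concentrability series for step $k$. Writing $\alpha_k = \alpha_k^{r}\,\alpha_k^{1-r}$ for an arbitrary $r\in[0,1]$ and applying the Cauchy--Schwarz inequality over $k$,
\[
\sum_{k=0}^{K-1}\big(\alpha_k^{r}\varepsilon_k^{1/2}\big)\big(\alpha_k^{1-r}b_k\big)\;\le\;\left(\sum_{k=0}^{K-1}\alpha_k^{2r}\varepsilon_k\right)^{1/2}\left(\sum_{k=0}^{K-1}\alpha_k^{2(1-r)}b_k^2\right)^{1/2},
\]
and identifying the first factor with $\mathcal{E}^{1/2}(\varepsilon_0,\dots,\varepsilon_{K-1};r)$ and the second with $C_{\text{VI},\rho,\nu}^{1/2}(K;r)$ (the prefactor $(\tfrac{1-\gamma}{2})^2$ absorbs the constants from Steps 1--2). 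Taking the infimum over $r\in[0,1]$ yields \eqref{E:farahmand_avi}.

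The main obstacle is Step 1: the bookkeeping that tracks how each residual $e_k$ propagates \emph{forward} through the data-dependent greedy kernels $P^{\pi_{k+1}},\dots,P^{\pi_K}$ and \emph{backward} through $P^{\pi^*}$, so that the two kernel families, the weights $\alpha_k$, and the exact constant $\frac{2\gamma(1-\gamma^{K+1})}{(1-\gamma)^2}$ emerge in precisely the form needed to match the definitions of $c_{\text{VI}_1,\rho,\nu}$, $c_{\text{VI}_2,\rho,\nu}$ and $C_{\text{VI},\rho,\nu}$. By contrast, the norm manipulations and the Cauchy--Schwarz optimization over $r$ in Steps 2--3 are comparatively routine.
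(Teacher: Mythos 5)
The paper offers no proof of this statement: it is imported verbatim as Theorem 3.4 of \cite{farahmand2011regularization}, so there is nothing internal to compare your argument against, and your three-step skeleton --- pointwise error propagation through the optimal and greedy Bellman operators with telescoping, passage to the $(1,\rho)$-norm via concentrability coefficients, and the Cauchy--Schwarz split $\alpha_k=\alpha_k^{r}\alpha_k^{1-r}$ that produces the infimum over $r$ --- is a faithful reconstruction of the original source's proof structure. The only imprecision is in your Step 2: in the source the coefficients $c_{\text{VI}_1,\rho,\nu}$ and $c_{\text{VI}_2,\rho,\nu}$ are defined as $L^2(\nu)$-norms of Radon--Nikodym derivatives and enter through a Cauchy--Schwarz step that directly yields $\lVert e_k\rVert_{2,\nu}=\varepsilon_k^{1/2}$, rather than through a pointwise bound on the derivative followed by $\lVert e_k\rVert_{1,\nu}\le\lVert e_k\rVert_{2,\nu}$; since the theorem takes these coefficients ``as defined in'' the source, this distinction affects only the matching of constants, not the validity of your outline.
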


In the multi-task scenario, let the average approximation error across tasks be:
\begin{equation}
 \varepsilon_{\text{avg},k}(\mathbf{\hat{w}}_k, \hat{h}_k, \mathbf{\hat{f}}_k) = \dfrac{1}{T}\sum_{t=1}^T\lVert Q_{t,k+1} - \mathcal{T}^*_tQ_{t,k} \rVert^2_\nu,
 \label{E:eps_avg_def}
\end{equation}
where $Q_{t,k+1} = \hat{f}_{t,k} \circ \hat{h}_{k} \circ \hat{w}_{t,k}$, and $\mathcal{T}^*_t$ is the optimal Bellman operator of task $t$.

In the following, we extend the \gls{avi} bound of Theorem~\ref{T:farahmand_avi} to the multi-task scenario, by computing the average loss across tasks and pushing inside the average using Jensen's inequality.

\begin{theorem}
Let K be a positive integer, and $Q_{\text{max}} \leq \frac{R_{\text{max}}}{1-\gamma}$. Then for any sequence $(Q_k )^K_{k=0}\subset B(\mathcal{S}\times\mathcal{A}, Q_{\text{max}})$ and the corresponding sequence $(\varepsilon_{\text{avg},k})_{k=0}^{K-1}$, where $\varepsilon_{\text{avg},k}=\dfrac{1}{T}\sum_{t=1}^T\lVert Q_{t,k+1} - \mathcal{T}^*_tQ_{t,k} \rVert^2_\nu$, we have:
 \begin{align}
  \dfrac{1}{T} \sum_{t=1}^T \lVert Q^*_t - Q_t^{\pi_K}\rVert_{1,\rho} \leq \dfrac{2\gamma}{(1-\gamma)^2} \left[ \inf_{r \in [0,1]} C_{\text{VI}}^{\frac{1}{2}}(K;r)\mathcal{E}_{\text{avg}}^{\frac{1}{2}}(\varepsilon_{\text{avg}, 0}, \dots, \varepsilon_{\text{avg}, K-1};r) + \dfrac{2 \gamma^K R_{\text{max},\text{avg}}}{1 - \gamma}\right]\label{E:avi_bound}
\end{align}
with $\mathcal{E}_{\text{avg}} = \sum^{K-1}_{k=0}\alpha_k^{2r}\varepsilon_{\text{avg},k}$, $\gamma = \underset{t \in \lbrace 1,\dots,T \rbrace}{\max} \gamma_t$, $C_{\text{VI}}^{\frac{1}{2}}(K;r) = \underset{t \in \lbrace 1, \dots, T \rbrace}{\max} C^{\frac{1}{2}}_{\text{VI},\rho,\nu}(K;t,r)$, $R_{\text{max},\text{avg}} = \dfrac{1}{T} \sum_{t=1}^T R_{\text{max},t}$ and $\alpha_k = \begin{cases} \frac{(1 - \gamma)\gamma^{K-k-1}}{1-\gamma^{K+1}} & 0 \leq k < K,\\ \frac{(1-\gamma)\gamma^K}{1 - \gamma^{K+1}} & k = K \end{cases}$.
\label{T:avi_bound}
\end{theorem}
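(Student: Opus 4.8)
The plan is to derive the multi-task bound by applying Theorem~\ref{T:farahmand_avi} separately to each of the $T$ MDPs, averaging the resulting inequalities over $t$, and then reshaping the right-hand side so the per-task quantities collapse into the task-averaged ones of~(\ref{E:avi_bound}). Concretely, fix $t\in\{1,\dots,T\}$ and apply Theorem~\ref{T:farahmand_avi} to $\mathcal{M}^{(t)}$ with its own discount $\gamma_t$, its sequence $(Q_{t,k})_{k=0}^{K}$, and the errors $\varepsilon_{t,k}=\lVert Q_{t,k+1}-\mathcal{T}^*_t Q_{t,k}\rVert_\nu^2$; this yields $\lVert Q^*_t - Q_t^{\pi_K}\rVert_{1,\rho}\le \frac{2\gamma_t}{(1-\gamma_t)^2}\big[\inf_{r\in[0,1]} C^{1/2}_{\text{VI},\rho,\nu}(K;t,r)\,\mathcal{E}^{1/2}(\varepsilon_{t,0},\dots,\varepsilon_{t,K-1};r) + \frac{2}{1-\gamma_t}\gamma_t^K R_{\text{max},t}\big]$. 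Summing over $t$ and dividing by $T$ produces the left-hand side of~(\ref{E:avi_bound}) and a right-hand side that is the average of these per-task expressions.

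Next I would uniformize the task-dependent scalars. The maps $x\mapsto \frac{2x}{(1-x)^2}$ and $x\mapsto \frac{2}{1-x}x^{K}$ are nondecreasing on $(0,1)$, so replacing every $\gamma_t$ by $\gamma:=\max_t \gamma_t$ only enlarges the bound, which lets me pull the common factor $\frac{2\gamma}{(1-\gamma)^2}$ out of the average over $t$; the bias term then becomes exactly $\frac{2\gamma^K}{1-\gamma}\cdot\frac1T\sum_{t=1}^T R_{\text{max},t}=\frac{2\gamma^K}{1-\gamma}R_{\text{max},\text{avg}}$. For the estimation term, fix $r\in[0,1]$, drop the inner infimum by evaluating each summand at this common $r$, and bound each per-task concentrability constant by $C_{\text{VI}}^{1/2}(K;r):=\max_t C^{1/2}_{\text{VI},\rho,\nu}(K;t,r)$, which can then be taken outside the average. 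What remains to control is $\frac1T\sum_{t=1}^T \big(\sum_{k=0}^{K-1}\alpha_k^{2r}\varepsilon_{t,k}\big)^{1/2}$, with $\alpha_k$ the sequence written in the statement (built from the common $\gamma$).

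The Jensen step is the only substantive estimate: since $u\mapsto\sqrt{u}$ is concave and $\mathcal{E}$ is linear in the error vector, $\frac1T\sum_{t=1}^T\big(\sum_{k}\alpha_k^{2r}\varepsilon_{t,k}\big)^{1/2}\le\big(\frac1T\sum_{t}\sum_{k}\alpha_k^{2r}\varepsilon_{t,k}\big)^{1/2}=\big(\sum_{k}\alpha_k^{2r}\varepsilon_{\text{avg},k}\big)^{1/2}=\mathcal{E}_{\text{avg}}^{1/2}(\varepsilon_{\text{avg},0},\dots,\varepsilon_{\text{avg},K-1};r)$, using $\varepsilon_{\text{avg},k}=\frac1T\sum_t\varepsilon_{t,k}$ from~(\ref{E:eps_avg_def}). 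Collecting the factored-out constants, for every $r\in[0,1]$ this gives $\frac1T\sum_{t}\lVert Q^*_t - Q_t^{\pi_K}\rVert_{1,\rho}\le\frac{2\gamma}{(1-\gamma)^2}\big[C_{\text{VI}}^{1/2}(K;r)\,\mathcal{E}_{\text{avg}}^{1/2}(\varepsilon_{\text{avg},0},\dots,\varepsilon_{\text{avg},K-1};r) + \frac{2\gamma^K R_{\text{max},\text{avg}}}{1-\gamma}\big]$, and taking the infimum over $r\in[0,1]$ on the right yields~(\ref{E:avi_bound}). I expect the main obstacle to be bookkeeping rather than analysis: one must treat the coefficient sequence $\alpha_k$ and the concentrability coefficients consistently across tasks with different discounts — evaluating them at the worst-case discount $\gamma$ — so that the sum over $k$ commutes with the average over $t$ in the Jensen step; with that care in place, no further inequalities are needed.
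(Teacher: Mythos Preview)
Your proposal is correct and follows essentially the same route as the paper's proof: apply Theorem~\ref{T:farahmand_avi} per task, average, replace each $\gamma_t$ by $\gamma=\max_t\gamma_t$ and each per-task concentrability constant by the maximum, swap the average-of-infima for an infimum-of-averages, and finish with Jensen's inequality on $\sqrt{\cdot}$ to collapse $\frac1T\sum_t\mathcal{E}^{1/2}$ into $\mathcal{E}_{\text{avg}}^{1/2}$. Your final caveat about evaluating the $\alpha_k$ at the common $\gamma$ so that the $k$-sum commutes with the $t$-average is exactly the one step the paper also handles by fiat (``let $\alpha_k=\ldots$, then we bound''), so you have identified the only point requiring care and are otherwise in lockstep with the paper.
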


\paragraph{Remarks} Theorem~\ref{T:avi_bound} retains most of the properties of Theorem 3.4 of~\cite{farahmand2011regularization}, except that the regression error in the bound is now task-averaged. Interestingly, the second term of the sum in Equation~(\ref{E:avi_bound}) depends on the average maximum reward for each task. In order to obtain this result we use an overly pessimistic bound on $\gamma$ and the concentrability coefficients, however this approximation is not too loose if the \glspl{mdp} are sufficiently similar.

\subsection{Multi-task approximation error bound}
We bound the task-averaged approximation error $\varepsilon_{\text{avg}}$ at each \gls{avi} iteration $k$ involved in~(\ref{E:avi_bound}) following a derivation similar to the one proposed by~\cite{maurer2016benefit}, obtaining:
\begin{theorem}
Let $\boldsymbol{\mu}$, $\mathcal{W}$, $\mathcal{H}$ and $\mathcal{F}$ be defined as above and assume $0 \in \mathcal{H}$ and $f(0) = 0, \forall f \in \mathcal{F}$. Then for $\delta > 0$ with probability at least $1 - \delta$ in the draw of $\bar{\mathbf{Z}} \sim \prod_{t=1}^T \mu_t^n$ we have that
\begin{align}
\varepsilon_{\text{avg}}(\mathbf{\hat{w}}, \hat{h}, \mathbf{\hat{f}}) \leq& \, L(\mathcal{F}) \left( c_1 \dfrac{L(\mathcal{H})\sup_{l\in\{1,\dots,T\}} G(\mathcal{W}(\mathbf{X}_l))}{n} + c_2\dfrac{\sup_\mathbf{w}\lVert \mathbf{w}(\bar{\mathbf{X}})\rVert O(\mathcal{H})}{nT}\right.\nonumber\\
&\left.+ c_3\dfrac{\min_{p \in P}G(\mathcal{H}(p))}{nT}\right)+ c_4 \dfrac{\sup_{h,\mathbf{w}}\lVert h(\mathbf{w}(\bar{\mathbf{X}}))\rVert O(\mathcal{F})}{n\sqrt{T}}
+ \sqrt{\frac{8\ln(\frac{3}{\delta})}{nT}} + \varepsilon^*_{\text{avg}}.\label{E:bound}
\end{align}\label{T:apprx}
\end{theorem}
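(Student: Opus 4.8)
The plan is to adapt the chain of inequalities from \cite{maurer2016benefit} (their Theorem 5, which itself builds on the vector-valued contraction of \cite{maurer2016chain}) to our three-level composition $f_t \circ h \circ w_t$, while carefully tracking the fact that the representation $h$ is shared across tasks whereas $w_t$ and $f_t$ are task-specific. First I would introduce the empirical task-averaged risk $\hat\varepsilon_{\text{avg}}(\mathbf{w},h,\mathbf{f}) = \frac{1}{nT}\sum_{t,i}\ell(f_t(h(w_t(X_{ti}))),Y_{ti})$ and decompose $\varepsilon_{\text{avg}}(\mathbf{\hat w},\hat h,\mathbf{\hat f}) - \varepsilon^*_{\text{avg}}$ as the sum of (i) the generalization gap $\varepsilon_{\text{avg}}(\mathbf{\hat w},\hat h,\mathbf{\hat f}) - \hat\varepsilon_{\text{avg}}(\mathbf{\hat w},\hat h,\mathbf{\hat f})$, (ii) the optimization advantage $\hat\varepsilon_{\text{avg}}(\mathbf{\hat w},\hat h,\mathbf{\hat f}) - \hat\varepsilon_{\text{avg}}(\mathbf{w}^*,h^*,\mathbf{f}^*)$, which is $\le 0$ since $(\mathbf{\hat w},\hat h,\mathbf{\hat f})$ minimizes \eqref{E:mtrl}, and (iii) $\hat\varepsilon_{\text{avg}}(\mathbf{w}^*,h^*,\mathbf{f}^*) - \varepsilon^*_{\text{avg}}$, whose expectation is zero and which is controlled by a one-sided bounded-differences (McDiarmid) argument. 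Term (iii), together with the deviation needed to make (i) hold with probability $1-\delta$, produces the $\sqrt{8\ln(3/\delta)/(nT)}$ summand (three applications of a $\delta/3$ union bound, hence the constant $3$).

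The core work is bounding the expected supremum of the empirical process in term (i), namely $\mathbb{E}\sup_{\mathbf{w},h,\mathbf{f}}\bigl(\varepsilon_{\text{avg}} - \hat\varepsilon_{\text{avg}}\bigr)$, by a Gaussian complexity via the standard symmetrization step (converting to Rademacher/Gaussian averages over the $nT$ samples). Since $\ell$ is $1$-Lipschitz in its first argument, a contraction step removes $\ell$ at the cost of a constant, leaving a Gaussian average of the class $\{(f_t(h(w_t(X_{ti}))))_{t,i}\}$. The key decomposition, following \cite{maurer2016chain}, peels the composition one layer at a time: applying the Lipschitz-quotient bound $O(\mathcal{F})$ and the uniform Lipschitz constant $L(\mathcal{F})$ to strip off the task-specific heads $f_t$, then $O(\mathcal{H})$ and $L(\mathcal{H})$ to strip off the shared $h$, and finally controlling the Gaussian complexity of $\mathcal{W}^T$. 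This is exactly where the $T$-dependence splits: the $f_t$ are $T$ independent functions so their contribution carries a $\frac{1}{\sqrt{T}}$ after averaging (the term with $c_4$ and $O(\mathcal F)$), the shared $h$ appears once so its "chain" term $O(\mathcal{H})$ is divided by the full $nT$ (the $c_2$ term), the Gaussian complexity $G(\mathcal{H}(p))$ of the shared class — minimized over a suitable covering/net $p\in P$ of the image $\mathbf{w}(\bar{\mathbf{X}})$ — also gets the $\frac{1}{nT}$ benefit (the $c_3$ term), while the inner representations $w_t$, being task-specific, only give the per-task $\sup_l G(\mathcal{W}(\mathbf{X}_l))/n$ (the $c_1$ term). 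The overall factor $L(\mathcal{F})$ in front of the first three terms comes from pulling the Lipschitz constant of the outer maps through before doing the inner peeling.

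A subtlety I would flag explicitly is handling tasks with different input dimensionalities $\mathcal{X}^{(t)}$: the definitions of $O(\cdot)$ and $G(\cdot)$ in the preliminaries are stated for a common input space, so I would note that $w_t: \mathcal{X}^{(t)}\to\mathbb{R}^J$ maps into a common $J$-dimensional feature space, and from the first hidden layer onward ($h$ and $f_t$ act on $\mathbb{R}^J$ resp. $\mathbb{R}^K$) the chain argument proceeds uniformly; only the innermost term $G(\mathcal{W}(\mathbf{X}_l))$ is genuinely per-task and that is why it appears under a supremum over $l$ rather than an average. The assumptions $0\in\mathcal{H}$ and $f(0)=0$ are used to center the classes so that the Lipschitz-quotient (rather than merely diameter) bounds apply cleanly and so the norms $\lVert\mathbf{w}(\bar{\mathbf{X}})\rVert$ and $\lVert h(\mathbf{w}(\bar{\mathbf{X}}))\rVert$ — not differences — suffice as the scale factors multiplying $O(\mathcal H)$ and $O(\mathcal F)$.

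The main obstacle is the bookkeeping in the layer-peeling step: getting the right normalization ($n$ vs. $nT$ vs. $n\sqrt{T}$) on each of the four terms requires applying the vector-contraction inequality of \cite{maurer2016chain} at the correct granularity (over all $T$ heads jointly for the shared layers, versus task-by-task for the private layers) and correctly propagating the Lipschitz constants $L(\mathcal{F})$, $L(\mathcal{H})$ through the nested structure; a naive peeling would lose the crucial $\frac{1}{\sqrt{T}}$ and $\frac{1}{T}$ factors that are the whole point of the result. Everything else — symmetrization, $\ell$-contraction, and the McDiarmid concentration giving the $\sqrt{8\ln(3/\delta)/(nT)}$ term — is routine once that skeleton is in place.
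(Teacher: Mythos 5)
Your proposal follows essentially the same route as the paper's proof: the identical three-term decomposition (generalization gap of the empirical minimizer, nonpositive empirical optimization term, Hoeffding/McDiarmid concentration for the true minimizer), followed by the Gaussian-complexity uniform bound and two applications of the chain rule of \cite{maurer2016chain} to peel off $\mathcal{F}'$ and $\mathcal{H}$, with the same accounting of $O(\mathcal{F}')\leq\sqrt{T}\,O(\mathcal{F})$, $G(\mathcal{W}'(\bar{\mathbf{X}}))\leq T\sup_l G(\mathcal{W}(\mathbf{X}_l))$, and the diameters bounded by the sup norms. The only minor discrepancy is your reading of the assumptions $0\in\mathcal{H}$ and $f(0)=0$, which the paper uses solely to annihilate the residual term $\min_{y}G(\mathcal{F}(y))$ rather than to justify the norm-based scale factors, but this does not affect the argument.
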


\paragraph{Remarks}
The assumptions $0 \in \mathcal{H}$ and $f(0) = 0$ for all $f \in \mathcal{F}$ are not essential for the proof and are only needed to simplify the result. For reasonable function classes, the Gaussian complexity $G(\mathcal{W}(\mathbf{X}_l))$ is $\mathcal{O}(\sqrt n)$. If $\sup_\mathbf{w}\lVert \mathbf{w}(\bar{\mathbf{X}})\rVert$ and $\sup_{h,\mathbf{w}}\lVert h(\mathbf{w}(\bar{\mathbf{X}}))\rVert$ can be uniformly bounded, then they are $\mathcal{O}(\sqrt{nT})$. For some function classes, the Gaussian average of Lipschitz quotients $O(\cdot)$ can be bounded independently from the number of samples.
Given these assumptions, the first and the fourth term of the right hand side of Equation~(\ref{E:bound}), which represent respectively the cost of learning the meta-state space $\mathbf{w}$ and the task-specific $\mathbf{f}$ mappings, are both $\mathcal{O}(\nicefrac{1}{\sqrt n})$. The second term represents the cost of learning the multi-task representation $h$ and is $\mathcal{O}(\nicefrac{1}{\sqrt{nT}})$, thus vanishing in the multi-task limit $T \to \infty$. The third term can be removed if $\forall h \in \mathcal{H}, \exists p_0 \in P : h(p) = 0$; even when this assumption does not hold, this term can be ignored for many classes of interest, e.g. neural networks, as it can be arbitrarily small.

The last term to be bounded in~(\ref{E:bound}) is the minimum average approximation error $\varepsilon_{\text{avg}}^*$ at each \gls{avi} iteration $k$. Recalling that the task-averaged approximation error is defined as in~(\ref{E:eps_avg_def}), applying Theorem 5.3 by~\cite{farahmand2011regularization} we obtain:
\begin{lemma}
Let $Q_{t,k}^*, \forall t \in \lbrace 1, \dots, T \rbrace$ be the minimizers of $\varepsilon_{\text{avg},k}^*$, $\check{t}_k = \arg \max_{t \in \lbrace 1, \dots, T \rbrace} \lVert Q_{t,k+1}^* - \mathcal{T}^*_tQ_{t,k} \rVert^2_\nu$, and $b_{k,i} = \lVert Q_{\check{t}_k,i+1} - \mathcal{T}^*_{\check{t}}Q_{\check{t}_k,i}\rVert_\nu$, then:
\begin{align}
\varepsilon_{\text{avg},k}^* \leq &\left(\vphantom{\sum_{i=0}^{k-1}}\lVert Q_{\check{t}_k,k+1}^* - (\mathcal{T}^*_{\check{t}})^{k+1}Q_{\check{t}_k,0} \rVert_\nu 
+ \sum_{i=0}^{k-1}(\gamma_{\check{t}_k} C_{\text{AE}}(\nu;\check{t}_k, P))^{i+1}b_{k,k-1-i}\right)^2,\label{E:eps_star_bound}
\end{align}
with $C_{\text{AE}}$ defined as in~\cite{farahmand2011regularization}.
\label{T:eps_star}
\end{lemma}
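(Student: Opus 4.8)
The plan is to collapse the task-average into a single worst-case task and then invoke the single-task error-propagation analysis of~\cite{farahmand2011regularization}. First I would note that $\varepsilon_{\text{avg},k}^*$, by definition~(\ref{E:eps_avg_def}), is an arithmetic mean over the $T$ tasks of the squared $\nu$-norm errors $\lVert Q_{t,k+1}^* - \mathcal{T}_t^* Q_{t,k}\rVert_\nu^2$. An average never exceeds its largest summand, and $\check{t}_k$ is precisely the index realizing that largest summand, so
\[
\varepsilon_{\text{avg},k}^* \;=\; \frac{1}{T}\sum_{t=1}^{T}\lVert Q_{t,k+1}^* - \mathcal{T}_t^* Q_{t,k}\rVert_\nu^2 \;\leq\; \lVert Q_{\check{t}_k,k+1}^* - \mathcal{T}_{\check{t}}^* Q_{\check{t}_k,k}\rVert_\nu^2 .
\]
This reduces the statement to bounding a purely single-task quantity: the $\nu$-distance between the best representable function at iteration $k+1$ and the Bellman image of the actual iterate $Q_{\check{t}_k,k}$, all within the MDP $\mathcal{M}^{(\check{t}_k)}$.

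Second, I would apply Theorem~5.3 of~\cite{farahmand2011regularization} to the task $\check{t}_k$. That theorem controls $\lVert Q_{k+1}^* - \mathcal{T}^* Q_k\rVert_\nu$ by a telescoping argument: a triangle inequality splits it into the \emph{inherent} term $\lVert Q_{k+1}^* - (\mathcal{T}^*)^{k+1} Q_0\rVert_\nu$ plus $\lVert (\mathcal{T}^*)^{k+1} Q_0 - \mathcal{T}^* Q_k\rVert_\nu$; the latter is peeled one Bellman application at a time, each peel costing a factor $\gamma$ from the contraction of $\mathcal{T}^*$ and a factor $C_{\text{AE}}(\nu;\cdot,P)$ from the change of measure, and contributing one genuine approximation error $\lVert Q_{i+1} - \mathcal{T}^* Q_i\rVert_\nu$. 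Instantiated for $\mathcal{M}^{(\check{t}_k)}$ with its own discount $\gamma_{\check{t}_k}$ and its concentrability coefficient $C_{\text{AE}}(\nu;\check{t}_k,P)$, and writing $b_{k,i} = \lVert Q_{\check{t}_k,i+1} - \mathcal{T}_{\check{t}}^* Q_{\check{t}_k,i}\rVert_\nu$, this yields
\[
\lVert Q_{\check{t}_k,k+1}^* - \mathcal{T}_{\check{t}}^* Q_{\check{t}_k,k}\rVert_\nu \;\leq\; \lVert Q_{\check{t}_k,k+1}^* - (\mathcal{T}_{\check{t}}^*)^{k+1} Q_{\check{t}_k,0}\rVert_\nu + \sum_{i=0}^{k-1}\big(\gamma_{\check{t}_k}\, C_{\text{AE}}(\nu;\check{t}_k,P)\big)^{i+1} b_{k,k-1-i}.
\]
Squaring this inequality and chaining it with the reduction from the first step gives exactly~(\ref{E:eps_star_bound}).

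I do not expect a substantive obstacle here, since the nontrivial content — the error-propagation telescoping and the precise definition of $C_{\text{AE}}$ — is imported wholesale from the single-task result. The only point requiring care is bookkeeping around the index $\check{t}_k$: it is defined through both the minimizers $Q_{t,k+1}^*$ and the actual iterates $Q_{t,k}$, so one must fix this index first and only then read off $b_{k,i}$, $\gamma_{\check{t}_k}$ and $C_{\text{AE}}(\nu;\check{t}_k,P)$ for that same task before squaring; performing the steps in the other order would mismatch the per-task constants.
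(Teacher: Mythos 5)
Your proposal is correct and follows essentially the same route as the paper's own proof: bound the average by its largest summand at the index $\check{t}_k$, apply Theorem~5.3 of \cite{farahmand2011regularization} to that single task, and square. The only cosmetic difference is that the paper takes the square root first and squares at the end, whereas you bound the squared quantity directly; the two are equivalent.
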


\paragraph{Final remarks}
The bound for \gls{mtrl} is derived by composing the results in Theorems~\ref{T:avi_bound} and~\ref{T:apprx}, and Lemma~\ref{T:eps_star}.
The results above highlight the advantage of learning a shared representation. The bound in Theorem~\ref{T:avi_bound} shows that a small approximation error is critical to improve the convergence towards the optimal action-value function, and the bound in Theorem~\ref{T:apprx} shows that the cost of learning the shared representation at each \gls{avi} iteration is mitigated by using multiple tasks. This is particularly beneficial when the feature representation is complex, e.g. deep neural networks.

\subsection{Discussion}
As stated in the remarks of Equation~(\ref{E:bound}), the benefit of \gls{mtrl} is evinced by the second component of the bound, i.e. the cost of learning $h$, which vanishes with the increase of the number of tasks. Obviously, adding more tasks require the shared representation to be large enough to include all of them, undesirably causing the term $\sup_{h,\mathbf{w}}\lVert h(\mathbf{w}(\mathbf{\bar{X}}))\rVert$ in the fourth component of the bound to increase. This introduces a tradeoff between the number of features and number of tasks; however, for a reasonable number of tasks the number of features used in the single-task case is enough to handle them, as we show in some experiments in Section~\ref{S:exp}. Notably, since the \gls{avi}/\gls{api} framework provided by~\cite{farahmand2011regularization} provides an easy way to include the approximation error of a generic function approximator, it is easy to show the benefit in \gls{mtrl} of the bound in Equation~(\ref{E:bound}). Despite being just multi-task extensions of previous works, our results are the first one to theoretically show the benefit of sharing representation in \gls{mtrl}. Moreover, they serve as a significant theoretical motivation, besides to the intuitive ones, of the practical algorithms that we describe in the following sections.

\section{Sharing representations}
We want to empirically evaluate the benefit of our theoretical study in the problem of jointly learning $T$ different tasks $\mu_t$, introducing a neural network architecture for which our bounds hold. Following our theoretical framework, the network we propose extracts representations $w_t$ from inputs $x_t$ for each task $\mu_t$, mapping them to common features in a set of shared layers $h$, specializing the learning of each task in respective separated layers $f_t$, and finally computing the output $y_t = (f_t \circ h \circ w_t)(x_t) = f_t(h(w_t(x_t)))$ (Figure~\ref{F:network}). The idea behind this architecture is not new in the literature. For instance, similar ideas have already been used in \gls{dqn} variants to improve exploration on the same task via bootstrapping~\citep{osband2016deep} and to perform \gls{mtrl}~\citep{liu2016decoding}.

\begin{figure}
\centering
 \subfigure[Shared network\label{F:network}]
 {\includegraphics[scale=.4]{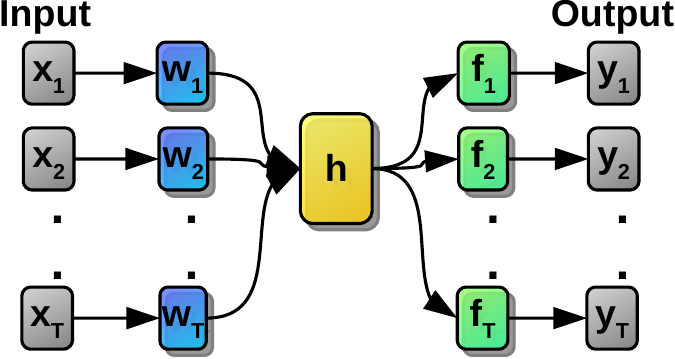}}\hspace{-.1cm}
 \subfigure[\gls{fqi} vs \gls{mfqi}\label{F:fqi}]{\includegraphics[scale=.22]{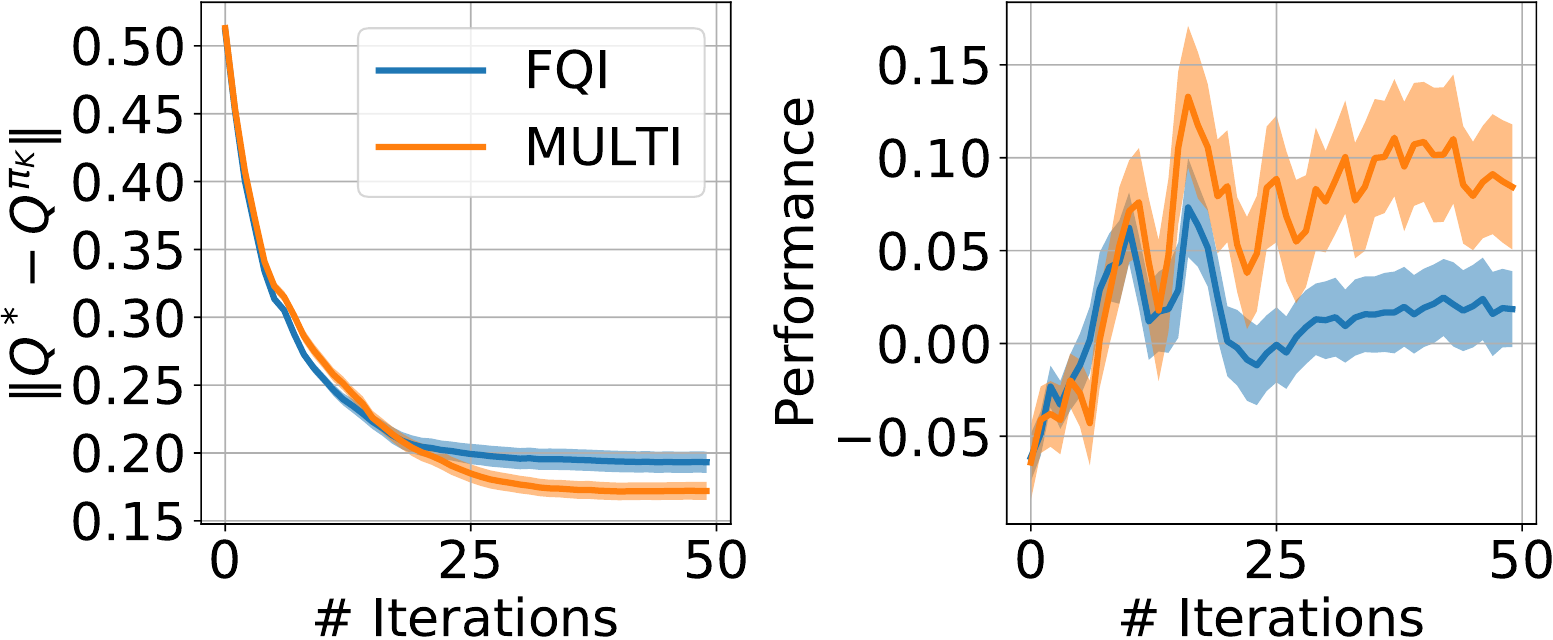}}
 \subfigure[\#Task analysis\label{F:multiple-tasks}]{\includegraphics[scale=.2]{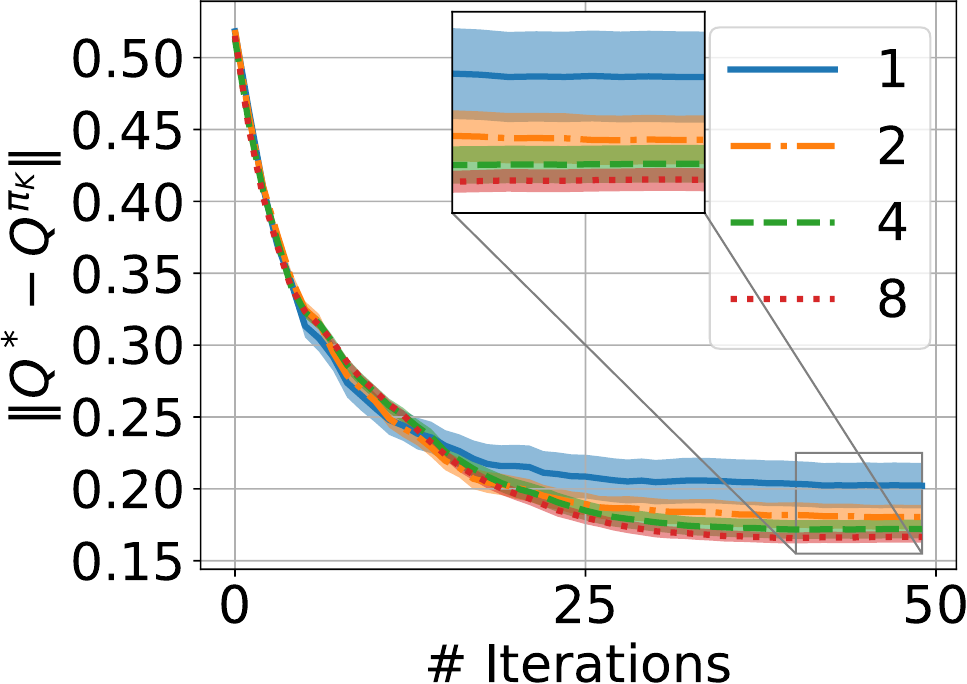}}
 \caption{(a) The architecture of the neural network we propose to learn $T$ tasks simultaneously. The $w_t$ block maps each input $x_t$ from task $\mu_t$ to a \textit{shared} set of layers $h$ which extracts a common representation of the tasks. Eventually, the shared representation is specialized in block $f_t$ and the output $y_t$ of the network is computed. Note that each block can be composed of arbitrarily many layers. (b) Results of \gls{fqi} and \gls{mfqi} averaged over $4$ tasks in \textit{Car-On-Hill}, showing $\lVert Q^* - Q^{\pi_K}\rVert$ on the left, and the discounted cumulative reward on the right. (c) Results of \gls{mfqi} showing $\lVert Q^* - Q^{\pi_K}\rVert$ for increasing number of tasks. Both results in (b) and (c) are averaged over $100$ experiments, and show the $95\%$ confidence intervals.}
\end{figure}

The intuitive and desirable property of this architecture is the exploitation of the regularization effect introduced by the shared representation of the jointly learned tasks. Indeed, unlike learning a single task that may end up in overfitting, forcing the model to compute a shared representation of the tasks helps the regression process to extract more general features, with a consequent reduction in the variance of the learned function. This intuitive justification for our approach, joins the theoretical benefit proven in Section~\ref{S:bounds}. Note that our architecture can be used in any \gls{mtrl} problem involving a regression process; indeed, it can be easily used in value-based methods as a $Q$-function regressor, or in policy search as a policy regressor. In both cases, the targets are learned for each task $\mu_t$ in its respective output block $f_t$. Remarkably, as we show in the experimental Section~\ref{S:exp}, it is straightforward to extend \gls{rl} algorithms to their multi-task variants only through the use of the proposed network architecture, without major changes to the algorithms themselves.

\section{Experimental results}\label{S:exp}
To empirically evince the effect described by our bounds, we propose an extension of \gls{fqi}~\citep{ernst2005tree, riedmiller2005neural}, that we call \gls{mfqi}, for which our AVI bounds apply.
Then, to empirically evaluate our approach in challenging \gls{rl} problems, we introduce multi-task variants of two well-known \gls{drl} algorithms: \gls{dqn}~\citep{mnih2015human} and \gls{ddpg}~\citep{lillicrap2015continuous}, which we call \gls{mdqn} and \gls{mddpg} respectively. Note that for these methodologies, our \gls{avi} and \gls{api} bounds hold only with the simplifying assumption that the samples are i.i.d.; nevertheless they are useful to show the benefit of our method also in complex scenarios, e.g. MuJoCo~\citep{todorov2012mujoco}. We remark that in these experiments we are only interested in showing the benefit of learning multiple tasks with a shared representation w.r.t. learning a single task; therefore, we only compare our methods with the single task counterparts, ignoring other works on \gls{mtrl} in literature. Experiments have been developed using the MushroomRL library~\citep{deramo2020mushroomrl}, and run on an NVIDIA® DGX Station™ and Intel® AI DevCloud. Refer to Appendix~\ref{App:exp} for all the details and our motivations about the experimental settings.

\subsection{Multi Fitted $Q$-Iteration}
As a first empirical evaluation, we consider \gls{fqi}, as an example of an AVI algorithm, to show the effect described by our theoretical AVI bounds in experiments. We consider the \textit{Car-On-Hill} problem as described in~\cite{ernst2005tree}, and select four different tasks from it changing the mass of the car and the value of the actions (details in Appendix~\ref{App:exp}). Then, we run separate instances of FQI with a single task network for each task respectively, and one of \gls{mfqi} considering all the tasks simultaneously. Figure~\ref{F:fqi} shows the $L_1$-norm of the difference between $Q^*$ and $Q^{\pi_K}$ averaged over all the tasks. It is clear how \gls{mfqi} is able to get much closer to the optimal $Q$-function, thus giving an empirical evidence of the AVI bounds in Theorem~\ref{T:avi_bound}. For completeness, we also show the advantage of \gls{mfqi} w.r.t. \gls{fqi} in performance. Then, in Figure~\ref{F:multiple-tasks} we provide an empirical evidence of the benefit of increasing the number of tasks in \gls{mfqi} in terms of both quality and stability.

\subsection{Multi Deep $Q$-Network}
As in~\cite{liu2016decoding}, our \gls{mdqn} uses separate replay memories for each task and the batch used in each training step is built picking the same number of samples from each replay memory. Furthermore, a step of the algorithm consists of exactly one step in each task. These are the only minor changes to the vanilla \gls{dqn} algorithm we introduce, while all other aspects, such as the use of the target network, are not modified. Thus, the time complexity of \gls{mdqn} is considerably lower than vanilla \gls{dqn} thanks to the learning of $T$ tasks with a single model, but at the cost of a higher memory complexity for the collection of samples for each task.
We consider five problems with similar state spaces, sparse rewards and discrete actions: \textit{Cart-Pole}, \textit{Acrobot}, \textit{Mountain-Car}, \textit{Car-On-Hill}, and \textit{Inverted-Pendulum}. The implementation of the first three problems is the one provided by the OpenAI Gym library~\cite{openai-gym}, while Car-On-Hill is described in~\cite{ernst2005tree} and Inverted-Pendulum in~\cite{lagoudakis2003least}.

\begin{figure}
 \subfigure[Multi-task\label{F:dqn-singlemulti}]{\includegraphics[scale=.263]{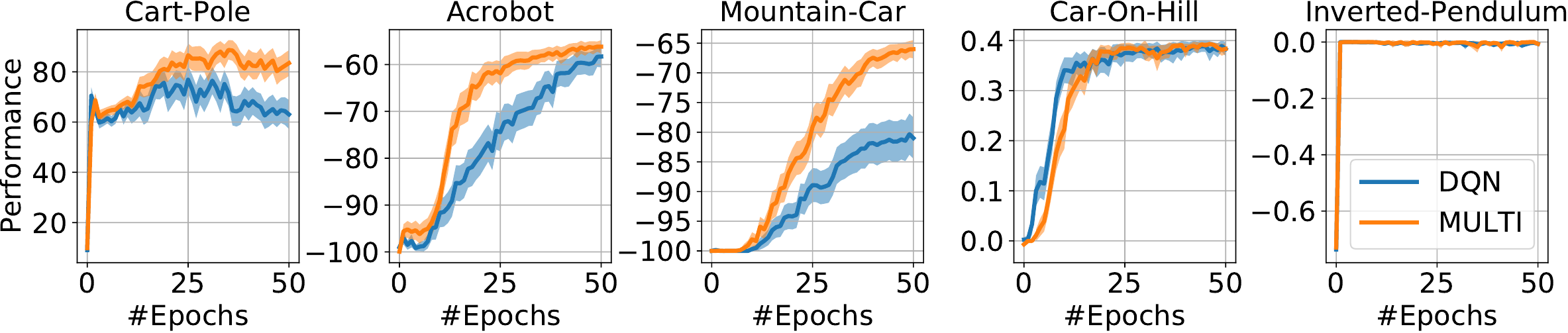}}
 \subfigure[Transfer\label{F:dqn-transfer}]{\includegraphics[scale=.15]{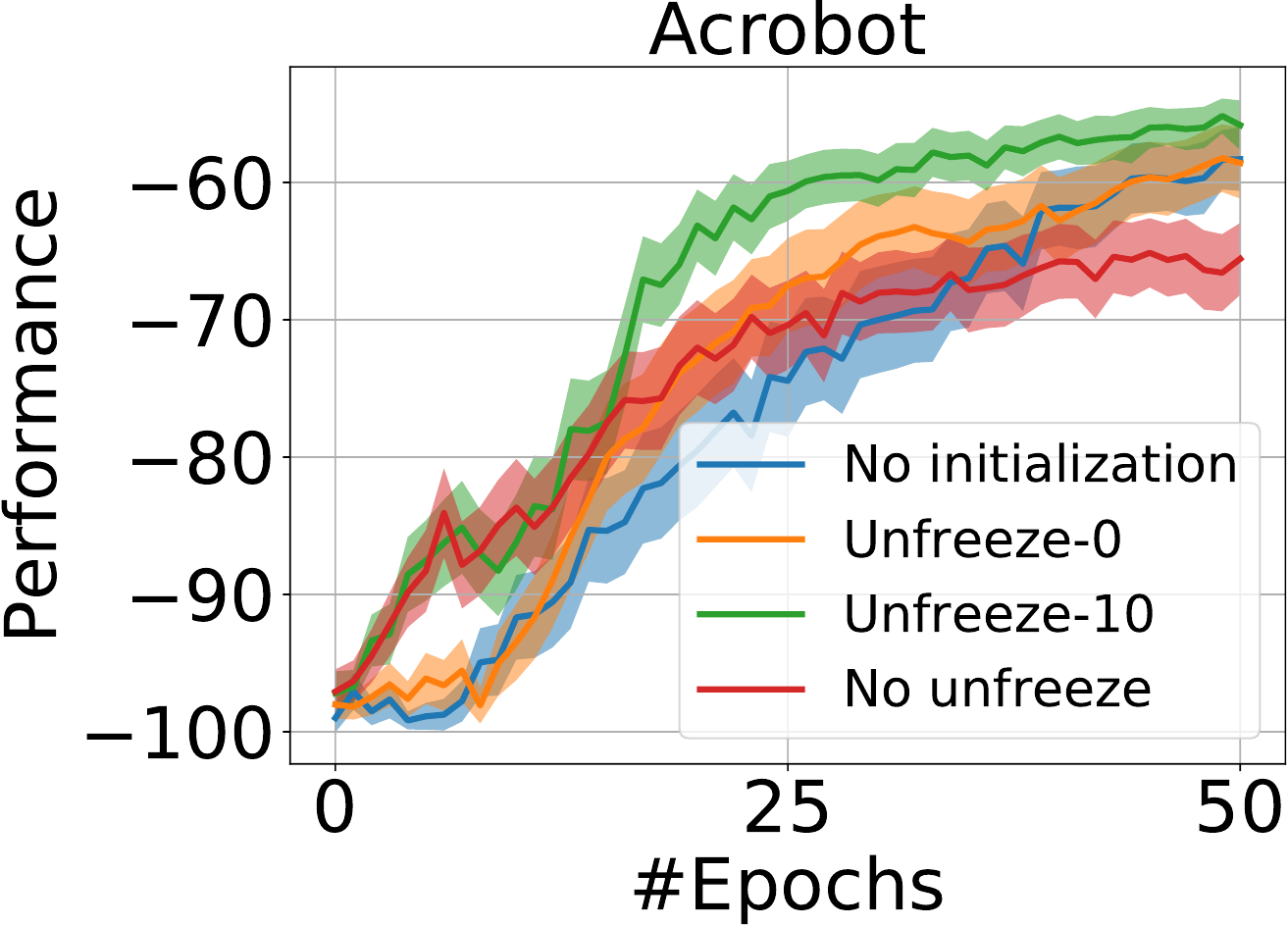}}
  \caption{Discounted cumulative reward averaged over $100$ experiments of \gls{dqn} and \gls{mdqn} for each task and for transfer learning in the \textit{Acrobot} problem. An epoch consists of $1,000$ steps, after which the greedy policy is evaluated for $2,000$ steps. The $95\%$ confidence intervals are shown.}
\end{figure}

Figure~\ref{F:dqn-singlemulti} shows the performance of \gls{mdqn} w.r.t. to vanilla \gls{dqn} that uses a single-task network structured as the multi-task one in the case with $T=1$. The first three plots from the left show good performance of \gls{mdqn}, which is both higher and more stable than \gls{dqn}. In Car-On-Hill, \gls{mdqn} is slightly slower than \gls{dqn} to reach the best performance, but eventually manages to be more stable. Finally, the Inverted-Pendulum experiment is clearly too easy to solve for both approaches, but it is still useful for the shared feature extraction in \gls{mdqn}. The described results provide important hints about the better quality of the features extracted by \gls{mdqn} w.r.t. \gls{dqn}. To further demonstrate this, we evaluate the performance of \gls{dqn} on Acrobot, arguably the hardest of the five problems, using a single-task network with the shared parameters in $h$ initialized with the weights of a multi-task network trained with \gls{mdqn} on the other four problems. Arbitrarily, the pre-trained weights can be adjusted during the learning of the new task or can be kept fixed and only the remaining randomly initialized parameters in $\mathbf{w}$ and $\mathbf{f}$ are trained. From Figure~\ref{F:dqn-transfer}, the advantages of initializing the weights are clear. In particular, we compare the performance of \gls{dqn} without initialization w.r.t. \gls{dqn} with initialization in three settings: in \textit{Unfreeze-0} the initialized weights are adjusted, in \textit{No-Unfreeze} they are kept fixed, and in \textit{Unfreeze-10} they are kept fixed until epoch $10$ after which they start to be optimized. Interestingly, keeping the shared weights fixed shows a significant performance improvement in the earliest epochs, but ceases to improve soon. On the other hand, the adjustment of weights from the earliest epochs shows improvements only compared to the uninitialized network in the intermediate stages of learning. The best results are achieved by starting to adjust the shared weights after epoch $10$, which is approximately the point at which the improvement given by the fixed initialization starts to lessen.

\begin{figure}
\centering
 \subfigure[Multi-task for pendulums\label{F:ddpg-singlemulti-pend}]{\includegraphics[scale=.275]{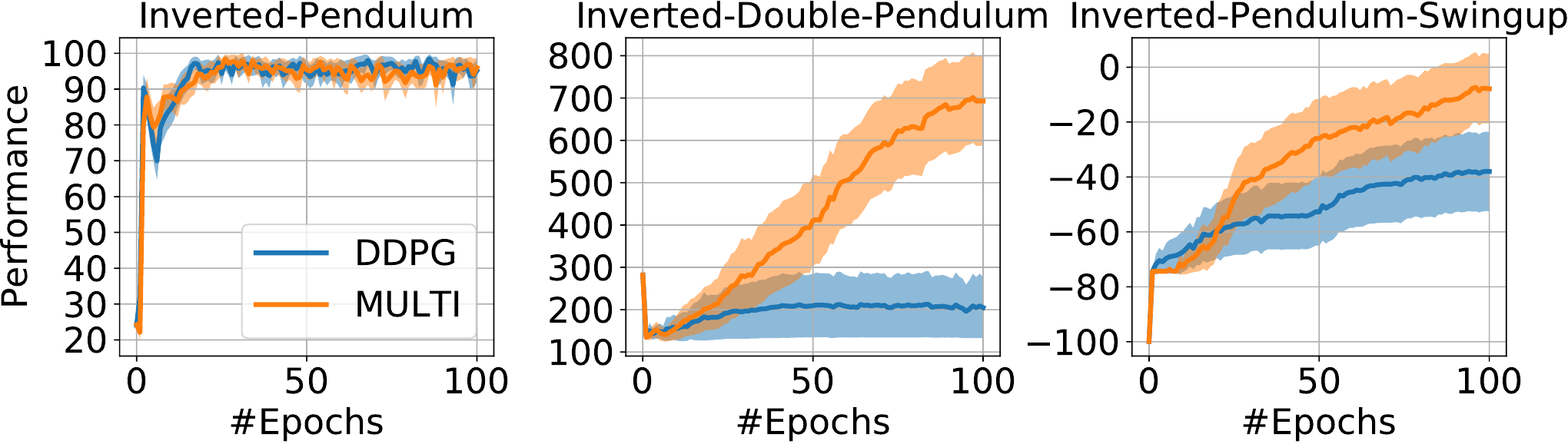}}
 \subfigure[Transfer for pendulums\label{F:ddpg-transfer-pend}]{\includegraphics[scale=.15]{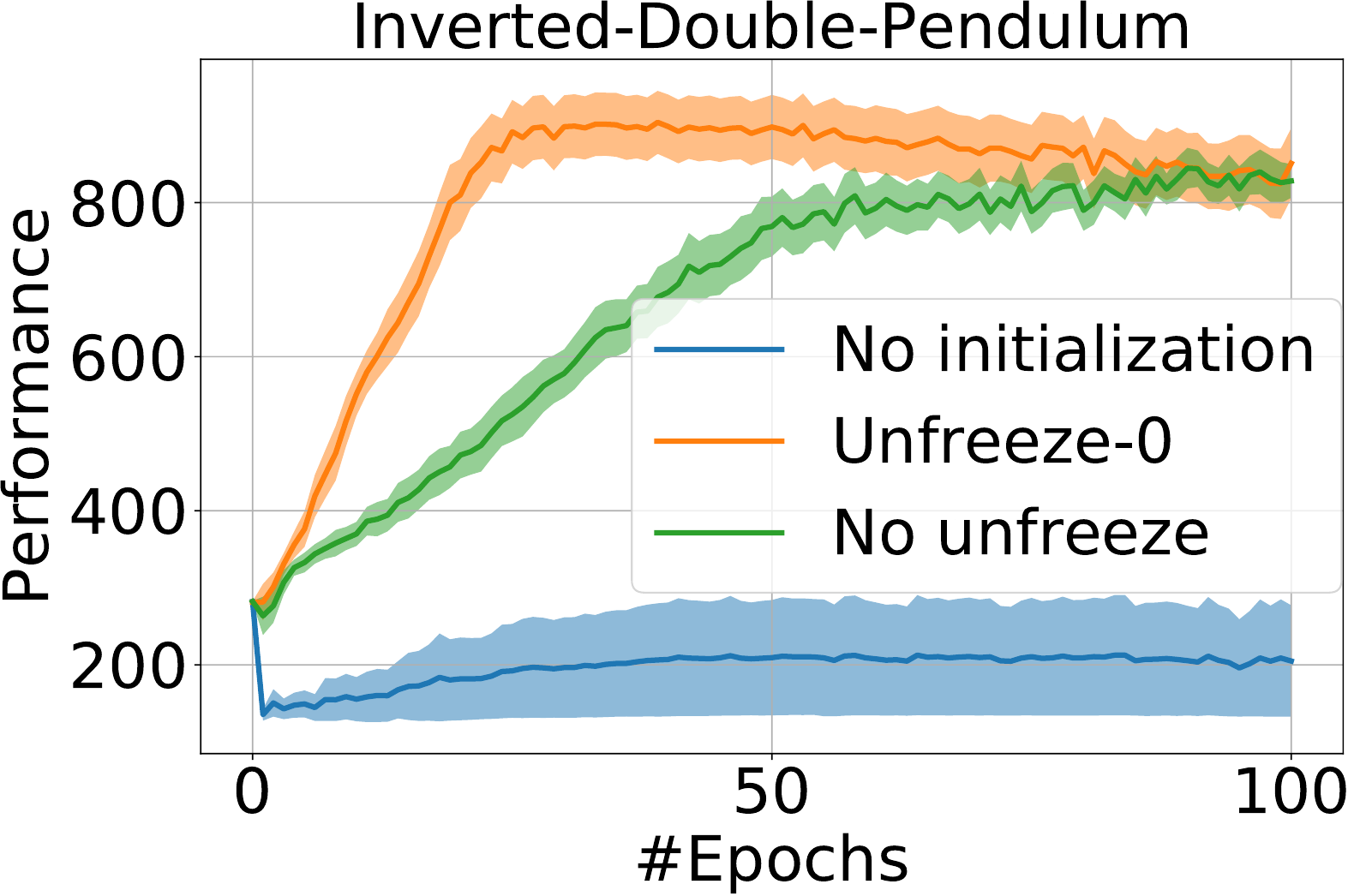}}
 \subfigure[Multi-task for walkers\label{F:ddpg-singlemulti-walk}]{\includegraphics[scale=.275]{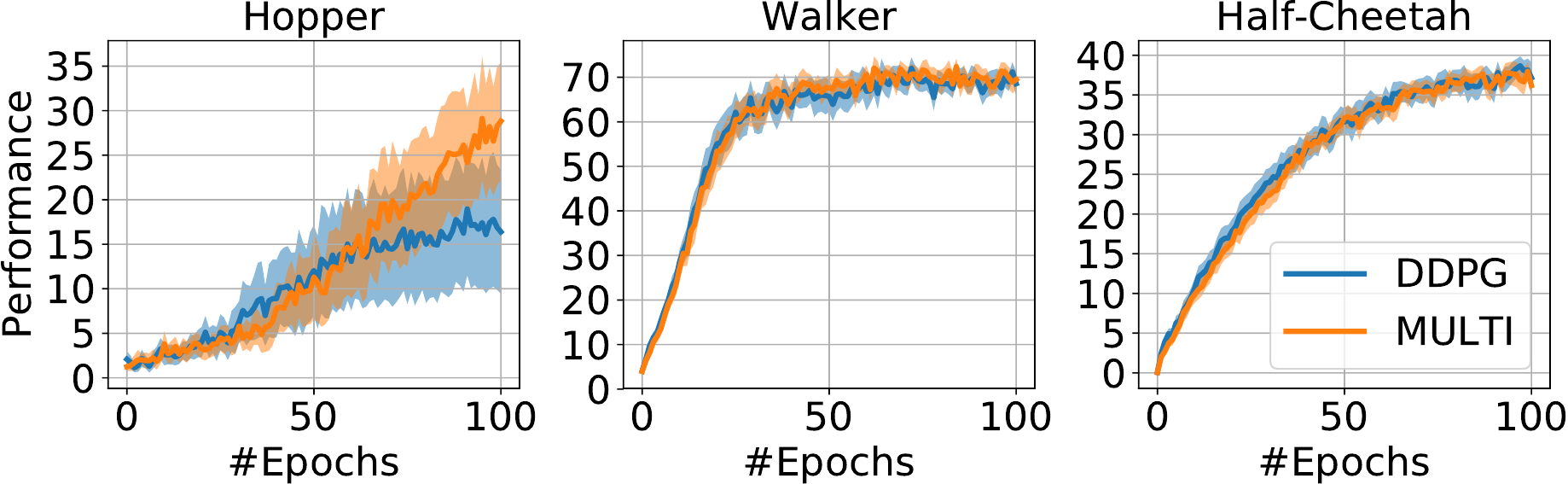}}\hspace{.5cm}
 \subfigure[Transfer for walkers\label{F:ddpg-transfer-walk}]{\includegraphics[scale=.15]{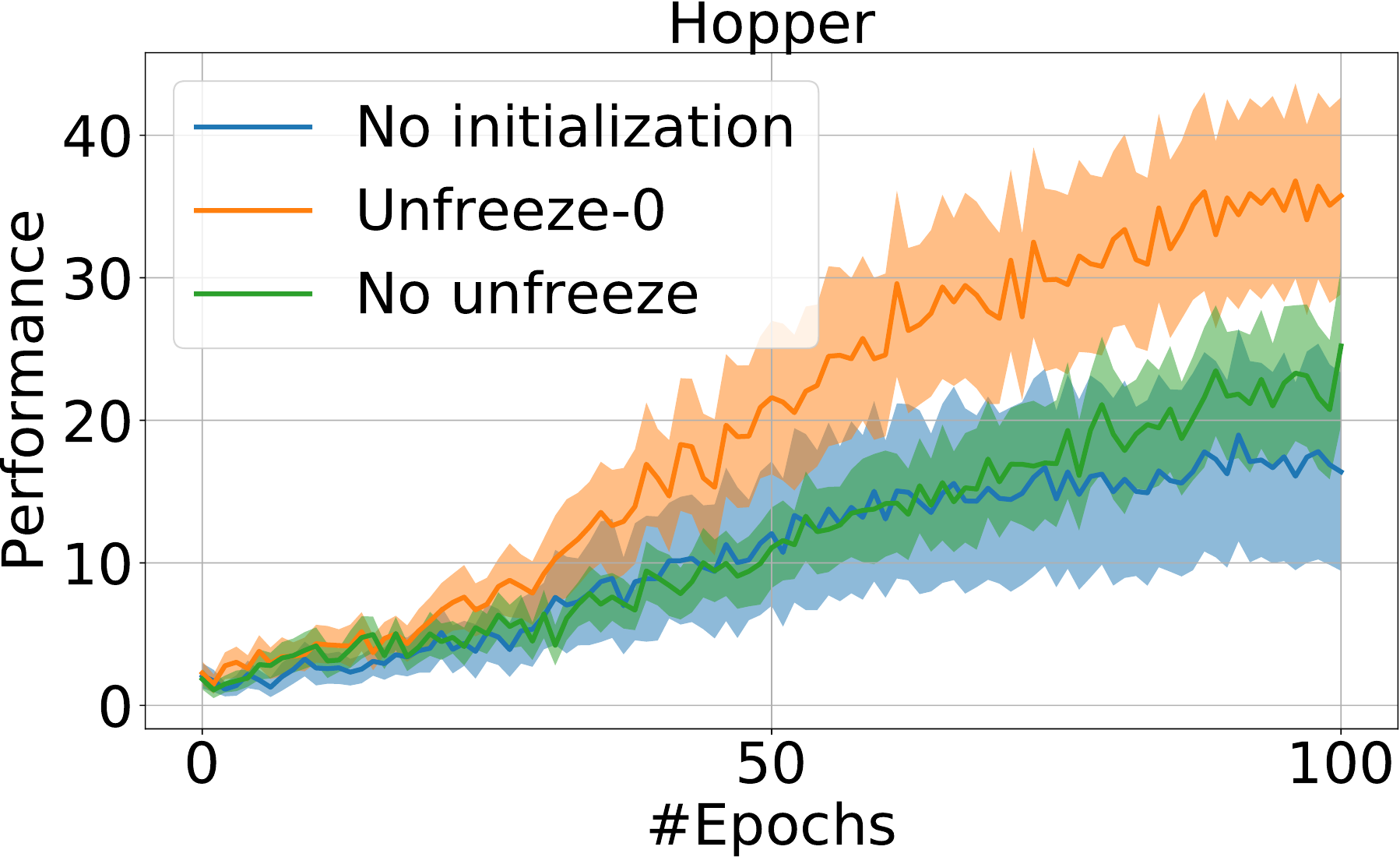}}
  \caption{Discounted cumulative reward averaged over $40$ experiments of \gls{ddpg} and \gls{mddpg} for each task and for transfer learning in the \textit{Inverted-Double-Pendulum} and \textit{Hopper} problems. An epoch consists of $10,000$ steps, after which the greedy policy is evaluated for $5,000$ steps. The $95\%$ confidence intervals are shown.}
\end{figure}
\subsection{Multi Deep Deterministic Policy Gradient}
In order to show how the flexibility of our approach easily allows to perform \gls{mtrl} in policy search algorithms, we propose \gls{mddpg} as a multi-task variant of \gls{ddpg}. As an actor-critic method, \gls{ddpg} requires an \textit{actor} network and a \textit{critic} network. Intuitively, to obtain \gls{mddpg} both the actor and critic networks should be built following our proposed structure. We perform separate experiments on two sets of MuJoCo~\cite{todorov2012mujoco} problems with similar continuous state and action spaces: the first set includes \textit{Inverted-Pendulum}, \textit{Inverted-Double-Pendulum}, and \textit{Inverted-Pendulum-Swingup} as implemented in the \textit{pybullet} library, whereas the second set includes \textit{Hopper-Stand}, \textit{Walker-Walk}, and \textit{Half-Cheetah-Run} as implemented in the DeepMind Control Suite\cite{DBLP:journals/corr/abs-1801-00690}. Figure~\ref{F:ddpg-singlemulti-pend} shows a relevant improvement of \gls{mddpg} w.r.t. \gls{ddpg} in the pendulum tasks. Indeed, while in Inverted-Pendulum, which is the easiest problem among the three, the performance of \gls{mddpg} is only slightly better than \gls{ddpg}, the difference in the other two problems is significant. The advantage of \gls{mddpg} is confirmed in Figure~\ref{F:ddpg-singlemulti-walk} where it performs better than \gls{ddpg} in Hopper and equally good in the other two tasks. Again, we perform a \gls{tl} evaluation of \gls{ddpg} in the problems where it suffers the most, by initializing the shared weights of a single-task network with the ones of a multi-task network trained with \gls{mddpg} on the other problems. Figures~\ref{F:ddpg-transfer-pend} and~\ref{F:ddpg-transfer-walk} show evident advantages of pre-training the shared weights and a significant difference between keeping them fixed or not.

\section{Related works}
Our work is inspired from both theoretical and empirical studies in \gls{mtl} and \gls{mtrl} literature. In particular, the theoretical analysis we provide follows previous results about the theoretical properties of multi-task algorithms. For instance,~\cite{cavallanti2010linear} and~\cite{maurer2006bounds} prove the theoretical advantages of \gls{mtl} based on linear approximation. More in detail,~\cite{maurer2006bounds} derives bounds on \gls{mtl} when a linear approximator is used to extract a shared representation among tasks. Then,~\cite{maurer2016benefit}, which we considered in this work, describes similar results that extend to the use of non-linear approximators. Similar studies have been conducted in the context of \gls{mtrl}. Among the others,~\cite{lazaric2011transfer} and~\cite{brunskill2013sample} give theoretical proofs of the advantage of learning from multiple \glspl{mdp} and introduces new algorithms to empirically support their claims, as done in this work.

Generally, contributions in \gls{mtrl} assume that properties of different tasks, e.g. dynamics and reward function, are generated from a common generative model. About this, interesting analyses consider Bayesian approaches; for instance~\cite{wilson2007multi} assumes that the tasks are generated from a hierarchical Bayesian model, and likewise~\cite{lazaric2010bayesian} considers the case when the value functions are generated from a common prior distribution. Similar considerations, which however does not use a Bayesian approach, are implicitly made in~\cite{taylor2007transfer},~\cite{lazaric2008transfer}, and also in this work.

In recent years, the advantages of \gls{mtrl} have been empirically evinced also in \gls{drl}, especially exploiting the powerful representational capacity of deep neural networks. For instance,~\cite{parisotto2015actor} and~\cite{rusu2015policy} propose to derive a multi-task policy from the policies learned by \gls{dqn} experts trained separately on different tasks.~\cite{rusu2015policy} compares to a therein introduced variant of \gls{dqn}, which is very similar to our \gls{mdqn} and the one in~\cite{liu2016decoding}, showing how their method overcomes it in the Atari benchmark~\cite{bellemare2013arcade}. Further developments, extend the analysis to policy search~\citep{yang2017multi, teh2017distral}, and to multi-goal \gls{rl}~\citep{schaul2015universal,andrychowicz2017hindsight}. Finally,~\cite{hessel2018multi} addresses the problem of balancing the learning of multiple tasks with a single deep neural network proposing a method that uniformly adapts the impact of each task on the training updates of the agent.

\section{Conclusion}
We have theoretically proved the advantage in \gls{rl} of using a shared representation to learn multiple tasks w.r.t. learning a single task. We have derived our results extending the \gls{avi}/\gls{api} bounds~\citep{farahmand2011regularization} to \gls{mtrl}, leveraging the upper bounds on the approximation error in \gls{mtl} provided in~\cite{maurer2016benefit}. The results of this analysis show that the error propagation during the \gls{avi}/\gls{api} iterations is reduced according to the number of tasks. Then, we proposed a practical way of exploiting this theoretical benefit which consists in an effective way of extracting shared representations of multiple tasks by means of deep neural networks. To empirically show the advantages of our method, we carried out experiments on challenging \gls{rl} problems with the introduction of multi-task extensions of \gls{fqi}, \gls{dqn}, and \gls{ddpg} based on the neural network structure we proposed. As desired, the favorable empirical results confirm the theoretical benefit we described.

\newpage
\subsubsection*{Acknowledgments}
This project has received funding from the European Union’s Horizon 2020 research and innovation programme under grant agreement No. \#640554 (SKILLS4ROBOTS) and No. \#713010 (GOAL-Robots). This project has also been supported by grants from NVIDIA, the NVIDIA DGX Station, and the Intel\textsuperscript{\textregistered} AI DevCloud. The authors thank Alberto Maria Metelli, Andrea Tirinzoni and Matteo Papini for their helpful insights during the development of the project.
\bibliography{bibliography}
\bibliographystyle{bibliography}

\newpage
\onecolumn
\appendix

\section{Proofs}\label{App:proofs}
\subsection{Approximated Value-Iteration bounds}

\begin{proof}[Proof of Theorem \ref{T:avi_bound}]
 We compute the average expected loss across tasks:
\begin{align}
 \dfrac{1}{T} &\sum_{t=1}^T \lVert Q^*_t - Q_t^{\pi_K}\rVert_{1,\rho}\nonumber\\
 & \leq \dfrac{1}{T} \sum_{t=1}^T \dfrac{2\gamma_t}{(1-\gamma_t)^2}\left[\inf_{r \in [0,1]} C^{\frac{1}{2}}_{\text{VI},\rho,\nu}(K;t,r) \mathcal{E}^{\frac{1}{2}}(\varepsilon_{t,0}, \dots, \varepsilon_{t,K-1};t,r)+\dfrac{2}{1 - \gamma_t}\gamma^K_t R_{\text{max},t}\right]\nonumber\\
 & \leq \dfrac{2\gamma}{(1-\gamma)^2} \dfrac{1}{T} \sum_{t=1}^T \left[\inf_{r \in [0,1]} C^{\frac{1}{2}}_{\text{VI},\rho,\nu}(K;t,r)\mathcal{E}^{\frac{1}{2}}(\varepsilon_{t,0}, \dots, \varepsilon_{t,K-1};t,r)+\dfrac{2}{1 - \gamma_t}\gamma^K_t R_{\text{max},t}\right]\nonumber\\
 & \leq \dfrac{2\gamma}{(1-\gamma)^2} \left[ \dfrac{1}{T} \sum_{t=1}^T \left(\inf_{r \in [0,1]} C^{\frac{1}{2}}_{\text{VI},\rho,\nu}(K;t,r)\mathcal{E}^{\frac{1}{2}}(\varepsilon_{t,0}, \dots, \varepsilon_{t,K-1};t,r)\right) + \dfrac{2}{1 - \gamma}\gamma^K R_{\text{max},\text{avg}}\right]\nonumber\\
 & \leq \dfrac{2\gamma}{(1-\gamma)^2} \left[ \inf_{r \in [0,1]} \dfrac{1}{T} \sum_{t=1}^T \left(C^{\frac{1}{2}}_{\text{VI},\rho,\nu}(K;t,r)\mathcal{E}^{\frac{1}{2}}(\varepsilon_{t,0}, \dots, \varepsilon_{t,K-1};t,r)\right) + \dfrac{2}{1 - \gamma}\gamma^K R_{\text{max},\text{avg}}\right]\nonumber\\
 & \leq \dfrac{2\gamma}{(1-\gamma)^2} \left[ \inf_{r \in [0,1]} C_{\text{VI}}^{\frac{1}{2}}(K;r) \dfrac{1}{T} \sum_{t=1}^T \left(\mathcal{E}^{\frac{1}{2}}(\varepsilon_{t,0}, \dots, \varepsilon_{t,K-1};t,r)\right) + \dfrac{2}{1 - \gamma}\gamma^K R_{\text{max},\text{avg}}\right]\label{E:avi_multi_farahmand}
\end{align}
with $\gamma = \underset{t \in \lbrace 1,\dots,T \rbrace}{\max} \gamma_t$, $C_{\text{VI}}^{\frac{1}{2}}(K;r) = \underset{t \in \lbrace 1, \dots, T \rbrace}{\max} C^{\frac{1}{2}}_{\text{VI},\rho,\nu}(K;t,r)$, and $R_{\text{max},\text{avg}} = \nicefrac{1}{T} \sum_{t=1}^T R_{\text{max},t}$.\\
Considering the term $\nicefrac{1}{T} \sum_{t=1}^T \left[\mathcal{E}^{\frac{1}{2}}(\varepsilon_{t,0}, \dots, \varepsilon_{t,K-1};t,r)\right] = \nicefrac{1}{T} \sum_{t=1}^T \left(\sum^{K-1}_{k=0}\alpha_{t,k}^{2r}\varepsilon_{t,k}\right)^{\frac{1}{2}}$ let
$$\alpha_k = \begin{cases} \frac{(1 - \gamma)\gamma^{K-k-1}}{1-\gamma^{K+1}} & 0 \leq k < K,\\ \frac{(1-\gamma)\gamma^K}{1 - \gamma^{K+1}} & k = K \end{cases},$$
then we bound
\begin{equation*}
 \dfrac{1}{T} \sum_{t=1}^T \left(\sum^{K-1}_{k=0}\alpha_{t,k}^{2r}\varepsilon_{t,k}\right)^{\frac{1}{2}} \leq \dfrac{1}{T} \sum_{t=1}^T \left(\sum^{K-1}_{k=0}\alpha_{k}^{2r}\varepsilon_{t,k}\right)^{\frac{1}{2}}.
\end{equation*}

Using Jensen's inequality:
\begin{equation*}
 \dfrac{1}{T} \sum_{t=1}^T \left(\sum^{K-1}_{k=0}\alpha_{k}^{2r}\varepsilon_{t,k}\right)^{\frac{1}{2}} \leq \left(\sum^{K-1}_{k=0}\alpha_{k}^{2r}\dfrac{1}{T}\sum_{t=1}^T\varepsilon_{t,k}\right)^{\frac{1}{2}}.
\end{equation*}
So, now we can write (\ref{E:avi_multi_farahmand}) as
\begin{align*}
  \dfrac{1}{T} \sum_{t=1}^T \lVert Q^*_t - Q_t^{\pi_K}\rVert_{1,\rho} \leq \dfrac{2\gamma}{(1-\gamma)^2} &\left[ \inf_{r \in [0,1]} C_{\text{VI}}^{\frac{1}{2}}(K;r)\mathcal{E}_{\text{avg}}^{\frac{1}{2}}(\varepsilon_{\text{avg},0}, \dots, \varepsilon_{\text{avg},K-1};r)\right.\nonumber \\
  &\left.+ \dfrac{2}{1 - \gamma} \gamma^K R_{\text{max},\text{avg}}\right],
\end{align*}
with $\varepsilon_{\text{avg},k} = \nicefrac{1}{T}\sum_{t=1}^T \varepsilon_{\text{t}, k}$ and $\mathcal{E}_{\text{avg}}(\varepsilon_{\text{avg},0}, \dots, \varepsilon_{\text{avg},K-1};r) = \sum^{K-1}_{k=0}\alpha_k^{2r}\varepsilon_{\text{avg},k}$.

\end{proof}

\begin{proof}[Proof of Lemma~\ref{T:eps_star}]
Let us start from the definition of optimal task-averaged risk:
\begin{equation*}
 \varepsilon_{\text{avg},k}^*= \dfrac{1}{T}\sum_{t=1}^{T}\lVert Q_{t,k+1}^* - \mathcal{T}^*_tQ_{t,k} \rVert^2_\nu,
\end{equation*}
where $Q_{t,k}^*$, with $t\in[1, T]$, are the minimizers of $\varepsilon_{\text{avg},k}$.

Consider the task $\check{t}$ such that
\begin{equation*}
\check{t}_k = \arg \max_{t \in \lbrace 1, \dots, T \rbrace} \lVert Q_{t,k+1}^* - \mathcal{T}^*_tQ_{t,k} \rVert^2_\nu,
\end{equation*}
we can write the following inequality:
\begin{equation*}
 \sqrt{\varepsilon_{\text{avg},k}^*} \leq \lVert Q_{\check{t}_k,k+1}^* - \mathcal{T}^*_{\check{t}}Q_{\check{t}_k,k} \rVert_\nu.
\end{equation*}
By the application of Theorem 5.3 by~\cite{farahmand2011regularization} to the right hand side, and defining $b_{k,i} = \lVert Q_{\check{t}_k,i+1} - \mathcal{T}^*_{\check{t}}Q_{\check{t}_k,i}\rVert_\nu$, we obtain:
\begin{equation*}
 \sqrt{\varepsilon_{\text{avg},k}^*} \leq \lVert Q_{\check{t}_k,k+1}^* - (\mathcal{T}^*_{\check{t}})^{k+1}Q_{\check{t}_k,0} \rVert_\nu + \sum_{i=0}^{k-1}(\gamma_{\check{t}_k} C_{\text{AE}}(\nu;\check{t}_k, P))^{i+1}b_{k,k-1-i}.
\end{equation*}

Squaring both sides yields the result:
\begin{align*}
\varepsilon_{\text{avg},k}^* \leq \left(\vphantom{\sum_{i=0}^{k-1}}\lVert Q_{\check{t}_k,k+1}^* - (\mathcal{T}^*_{\check{t}})^{k+1}Q_{\check{t}_k,0} \rVert_\nu + \sum_{i=0}^{k-1}(\gamma_{\check{t}_k} C_{\text{AE}}(\nu;\check{t}_k, P))^{i+1}b_{k,k-1-i}\right)^2.
\end{align*}
\end{proof}

\subsection{Approximated Policy-Iteration bounds}\label{A:api_bound}
We start by considering the bound for the \gls{api} framework:
\begin{theorem}(Theorem 3.2 of~\cite{farahmand2011regularization})
  Let K be a positive integer, and $Q_{\text{max}} \leq \frac{R_{\text{max}}}{1-\gamma}$. Then for any sequence $(Q_k )^{K-1}_{k=0}\subset B(\mathcal{S}\times\mathcal{A}, Q_{\text{max}})$ and the corresponding sequence $(\varepsilon_k)_{k=0}^{K-1}$, where $\varepsilon_k=\lVert Q_{k} - Q^{\pi_k} \rVert^2_\nu$, we have:
  \begin{align}
    \lVert Q^* - Q^{\pi_K}\rVert_{1,\rho} \leq \dfrac{2\gamma}{(1-\gamma)^2}\left[\inf_{r \in [0,1]} C^{\frac{1}{2}}_{\text{PI},\rho,\nu}(K;r)\mathcal{E}^{\frac{1}{2}}(\varepsilon_{0}, \dots, \varepsilon_{K-1};r)+\gamma^{K-1} R_{\text{max}}\right],\label{E:farahmand_api}
  \end{align}
  where
  \begin{flalign}
     &C_{\text{PI},\rho,\nu}(K;r) = &\nonumber\\
     &\left(\dfrac{1-\gamma}{2}\right)^2\sup_{\pi'_0,\dots,\pi'_K}\sum^{K-1}_{k=0}a_k^{2(1-r)} & &\left(\sum_{m \geq 0}\gamma^m c_{\text{PI}_1,\rho,\nu}(K-k-1,m+1;\pi'_{k+1})+\right.&\nonumber\\ 
     & & &\left.\sum_{m\geq 1}\gamma^m c_{\text{PI}_2,\rho,\nu}(K-k-1,m;\pi'_{k+1},\pi'_k)\vphantom{\sum_{m \geq 0}}+c_{\text{PI}_3,\rho,\nu}\right)^2;&
  \end{flalign}
  with $\mathcal{E}(\varepsilon_{0}, \dots, \varepsilon_{K-1};r)=\sum^{K-1}_{k=0}\alpha_k^{2r}\varepsilon_{k}$, the three coefficients $c_{\text{PI}_1,\rho,\nu}$, $c_{\text{PI}_2,\rho,\nu}$, $c_{\text{PI}_3,\rho,\nu}$, the distributions $\rho$ and $\nu$, and the series $\alpha_k$ are defined as in~\cite{farahmand2011regularization}\label{T:farahmand_api}.
\end{theorem}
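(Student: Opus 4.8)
The statement is a verbatim restatement of Theorem~3.2 in~\cite{farahmand2011regularization}, so in the paper itself it is simply invoked; I sketch the shape of its proof, because the multi-task \gls{api} bound that follows is obtained from it in exactly the way Theorem~\ref{T:avi_bound} was obtained from Theorem~\ref{T:farahmand_avi} (average over tasks, push the average inside with Jensen). The plan is the standard \emph{error-propagation analysis} for approximate policy iteration. First I would set up a pointwise recursion for the loss: since $\pi_{k+1}$ is greedy w.r.t.\ $Q_k$ we have $\mathcal{T}^{\pi_{k+1}}Q_k = \mathcal{T}^*Q_k \geq \mathcal{T}^{\pi^*}Q_k$, and combining this with the monotonicity and $\gamma$-contraction of the operators $\mathcal{T}^\pi$ one gets that $Q^*-Q^{\pi_{k+1}}$ is bounded, componentwise and nonnegatively, by a $\gamma$-discounted stochastic-kernel image of $Q^*-Q^{\pi_k}$ plus a term affine in the policy-evaluation residual $Q_k-Q^{\pi_k}$ (the quantity whose squared $\nu$-norm is $\varepsilon_k$).

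Second, I would unroll this recursion over $k=0,\dots,K-1$. Telescoping yields a finite sum in which each residual $\lvert Q_k-Q^{\pi_k}\rvert$ is pre-multiplied by a product of $\gamma$-discounted transition kernels; these products, once reorganized, are what the three concentrability families $c_{\text{PI}_1,\rho,\nu}$, $c_{\text{PI}_2,\rho,\nu}$, $c_{\text{PI}_3,\rho,\nu}$ collect (the last absorbing the one-step / tail contributions), while the contribution of the initial loss $Q^*-Q^{\pi_0}$, crudely bounded by $\frac{2R_{\text{max}}}{1-\gamma}$, is exactly what leaves the residual $\gamma^{K-1}R_{\text{max}}$ term.

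Third, I would integrate this pointwise bound against the evaluation measure $\rho$ to obtain $\lVert Q^*-Q^{\pi_K}\rVert_{1,\rho}$, push $\rho$ forward through the transition kernels, and apply H\"older's inequality, treating the normalized weights $\alpha_k$ (rescaled so that $\sum_k\alpha_k=1$, which is why the factors $\left(\frac{1-\gamma}{2}\right)^2$ and $\frac{2\gamma}{(1-\gamma)^2}$ appear) as a probability vector. The Radon--Nikodym factors relating the pushed-forward $\rho$ to the sampling measure $\nu$ are precisely what gets packaged into $C_{\text{PI},\rho,\nu}(K;r)$, the residuals go into $\mathcal{E}(\varepsilon_0,\dots,\varepsilon_{K-1};r)$, and the split of each weight $\alpha_k$ into an exponent $2(1-r)$ on the concentrability side and $2r$ on the error side is dictated by the H\"older exponents; taking $\inf_{r\in[0,1]}$ at the end optimizes this split.

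The part I expect to be delicate is the first/second step: because \gls{api} controls a policy-\emph{evaluation} error $\lVert Q_k-Q^{\pi_k}\rVert_\nu$ rather than a Bellman residual as in \gls{avi}, the kernel products surviving the telescoping---and hence the exact definitions of $c_{\text{PI}_1,\rho,\nu}$, $c_{\text{PI}_2,\rho,\nu}$, $c_{\text{PI}_3,\rho,\nu}$ with the correct discount powers and the correct dependence on $K-k$---differ from the \gls{avi} coefficients $c_{\text{VI}_1,\rho,\nu}$, $c_{\text{VI}_2,\rho,\nu}$. Pinning down those constants is the technical heart of the argument, and it is exactly this computation that~\cite{farahmand2011regularization} carries out; for the purposes of this paper we take it as given and only need that the bound is linear in the (now task-averaged) evaluation errors.
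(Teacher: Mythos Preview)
Your reading is correct: the paper does not prove this theorem at all, it merely quotes Theorem~3.2 of \cite{farahmand2011regularization} verbatim and uses it as a black box to derive the multi-task API bound (Theorem~\ref{T:api_bound}) by averaging over tasks and applying Jensen's inequality. Your identification of this, and the high-level sketch of the underlying Farahmand argument (pointwise recursion from greedy improvement, telescoping, H\"older with the $\alpha_k$ weights split via the exponent $r$), are accurate and sufficient for the paper's purposes.
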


From Theorem~\ref{T:farahmand_api}, by computing the average loss across tasks and pushing inside the average using Jensen's inequality, we derive the \gls{api} bounds averaged on multiple tasks.

\begin{theorem}
Let K be a positive integer, and $Q_{\text{max}} \leq \frac{R_{\text{max}}}{1-\gamma}$. Then for any sequence $(Q_k )^{K-1}_{k=0}\subset B(\mathcal{S}\times\mathcal{A}, Q_{\text{max}})$ and the corresponding sequence $(\varepsilon_{\text{avg},k})_{k=0}^{K-1}$, where $\varepsilon_{\text{avg},k}=\dfrac{1}{T}\sum_{t=1}^T\lVert Q_{t,k} - Q_{t}^{\pi_k} \rVert^2_\nu$, we have:
 \begin{align}
   \dfrac{1}{T}\sum_{t=1}^T \lVert Q^*_t - Q_t^{\pi_K}\rVert_{1,\rho} \leq \dfrac{2\gamma}{(1-\gamma)^2} &\left[ \inf_{r \in [0,1]} C_{\text{PI}}^{\frac{1}{2}}(K;r) \mathcal{E}_{\text{avg}}^{\frac{1}{2}}(\varepsilon_{\text{avg},0}, \dots, \varepsilon_{\text{avg},K-1};r)\right.\nonumber\\
   &\left.+ \gamma^{K-1} R_{\text{max},\text{avg}}\right],\label{E:api_bound}
\end{align}
with $\mathcal{E}_{\text{avg}} = \sum^{K-1}_{k=0}\alpha_k^{2r}\varepsilon_{\text{avg},k}$, $\gamma = \underset{t \in \lbrace 1,\dots,T \rbrace}{\max} \gamma_t$, $C_{\text{PI}}^{\frac{1}{2}}(K;r) = \underset{t \in \lbrace 1, \dots, T \rbrace}{\max} C^{\frac{1}{2}}_{\text{PI},\rho,\nu}(K;t,r)$, $R_{\text{max},\text{avg}} = \dfrac{1}{T} \sum_{t=1}^T R_{\text{max},t}$ and $\alpha_k = \begin{cases} \frac{(1 - \gamma)\gamma^{K-k-1}}{1-\gamma^{K+1}} & 0 \leq k < K,\\ \frac{(1-\gamma)\gamma^K}{1 - \gamma^{K+1}} & k = K \end{cases}$.
\label{T:api_bound}
\end{theorem}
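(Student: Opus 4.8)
The plan is to reproduce the proof of Theorem~\ref{T:avi_bound} verbatim, substituting the single-task \gls{api} bound (Theorem~\ref{T:farahmand_api}, i.e. Theorem 3.2 of~\cite{farahmand2011regularization}) for the single-task \gls{avi} bound. First I would apply Theorem~\ref{T:farahmand_api} separately to each task $t\in\{1,\dots,T\}$, with $\varepsilon_{t,k}=\lVert Q_{t,k}-Q_t^{\pi_k}\rVert^2_\nu$, and then average the $T$ resulting inequalities over $t$, obtaining
\begin{align*}
\dfrac{1}{T}\sum_{t=1}^T\lVert Q^*_t - Q_t^{\pi_K}\rVert_{1,\rho} \leq \dfrac{1}{T}\sum_{t=1}^T\dfrac{2\gamma_t}{(1-\gamma_t)^2}\left[\inf_{r\in[0,1]}C^{\frac{1}{2}}_{\text{PI},\rho,\nu}(K;t,r)\,\mathcal{E}^{\frac{1}{2}}(\varepsilon_{t,0},\dots,\varepsilon_{t,K-1};t,r)+\gamma_t^{K-1}R_{\text{max},t}\right].
\end{align*}

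Next I would replace the per-task constants by their worst case over $t$. Since $x\mapsto \nicefrac{2x}{(1-x)^2}$ is increasing on $(0,1)$, we have $\nicefrac{2\gamma_t}{(1-\gamma_t)^2}\leq\nicefrac{2\gamma}{(1-\gamma)^2}$ with $\gamma=\max_t\gamma_t$; similarly $C^{\frac{1}{2}}_{\text{PI},\rho,\nu}(K;t,r)\leq C^{\frac{1}{2}}_{\text{PI}}(K;r):=\max_t C^{\frac{1}{2}}_{\text{PI},\rho,\nu}(K;t,r)$ and $\gamma_t^{K-1}\leq\gamma^{K-1}$. Pulling $\nicefrac{2\gamma}{(1-\gamma)^2}$ out of the average (the bracketed quantities are non-negative), bounding $\nicefrac{1}{T}\sum_t\gamma_t^{K-1}R_{\text{max},t}\leq\gamma^{K-1}R_{\text{max},\text{avg}}$ with $R_{\text{max},\text{avg}}=\nicefrac{1}{T}\sum_t R_{\text{max},t}$, moving the infimum outside via $\nicefrac{1}{T}\sum_t\inf_r g_t(r)\leq\inf_r\nicefrac{1}{T}\sum_t g_t(r)$, and factoring out $C^{\frac{1}{2}}_{\text{PI}}(K;r)$, the problem reduces to bounding $\nicefrac{1}{T}\sum_t\big(\sum_{k=0}^{K-1}\alpha_{t,k}^{2r}\varepsilon_{t,k}\big)^{\frac{1}{2}}$. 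As in the \gls{avi} proof, I would first use $\alpha_{t,k}\leq\alpha_k$ (with $\alpha_k$ as in the statement, built from the common $\gamma$), together with $2r\geq0$ and $\varepsilon_{t,k}\geq0$, to replace $\alpha_{t,k}$ by $\alpha_k$; then Jensen's inequality applied to the concave square root pushes the task-average inside, giving $\big(\sum_{k=0}^{K-1}\alpha_k^{2r}\,\nicefrac{1}{T}\sum_t\varepsilon_{t,k}\big)^{\frac{1}{2}}=\mathcal{E}_{\text{avg}}^{\frac{1}{2}}(\varepsilon_{\text{avg},0},\dots,\varepsilon_{\text{avg},K-1};r)$ with $\varepsilon_{\text{avg},k}=\nicefrac{1}{T}\sum_t\varepsilon_{t,k}$. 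Collecting the two terms yields~(\ref{E:api_bound}).

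The argument is structurally identical to the proof of Theorem~\ref{T:avi_bound}, so the only delicate points — and the main obstacle — are the worst-case substitutions: verifying $\alpha_{t,k}\leq\alpha_k$ and the monotonicity of the prefactor and of the concentrability coefficients $C_{\text{PI},\rho,\nu}(K;t,r)$ in the per-task discount factors $\gamma_t$. These are the deliberately pessimistic steps (tight only when the \glspl{mdp} are similar), whereas the averaging, the inf/average interchange, and the Jensen step are routine.
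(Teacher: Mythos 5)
Your proposal matches the paper's own proof, which explicitly proceeds by repeating the multi-task AVI argument with Theorem 3.2 of \cite{farahmand2011regularization} in place of Theorem 3.4: apply the single-task API bound per task, average, replace $\gamma_t$, $C_{\text{PI},\rho,\nu}(K;t,r)$ and $\alpha_{t,k}$ by their worst cases, interchange the infimum with the average, and push the task-average inside the square root via Jensen's inequality. The delicate monotonicity substitutions you flag are asserted at the same level of detail in the paper, so there is no gap relative to the published argument.
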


\begin{proof}[Proof of Theorem \ref{T:api_bound}]
The proof is very similar to the one for \gls{avi}. We compute the average expected loss across tasks:
\begin{align}
 \dfrac{1}{T} &\sum_{t=1}^T \lVert Q^*_t - Q_t^{\pi_K}\rVert_{1,\rho}\nonumber\\
 & \leq \dfrac{1}{T} \sum_{t=1}^T \dfrac{2\gamma_t}{(1-\gamma_t)^2}\left[\inf_{r \in [0,1]} C^{\frac{1}{2}}_{\text{PI},\rho,\nu}(K;t,r) \mathcal{E}^{\frac{1}{2}}(\varepsilon_{t,0}, \dots, \varepsilon_{t,K-1};t,r)+\gamma^{K-1}_t R_{\text{max},t}\right]\nonumber\\
 & \leq \dfrac{2\gamma}{(1-\gamma)^2} \dfrac{1}{T} \sum_{t=1}^T \left[\inf_{r \in [0,1]} C^{\frac{1}{2}}_{\text{PI},\rho,\nu}(K;t,r)\mathcal{E}^{\frac{1}{2}}(\varepsilon_{t,0}, \dots, \varepsilon_{t,K-1};t,r)+\gamma^{K-1}_t R_{\text{max},t}\right]\nonumber\\
 & \leq \dfrac{2\gamma}{(1-\gamma)^2} \left[ \dfrac{1}{T} \sum_{t=1}^T \left(\inf_{r \in [0,1]} C^{\frac{1}{2}}_{\text{PI},\rho,\nu}(K;t,r)\mathcal{E}^{\frac{1}{2}}(\varepsilon_{t,0}, \dots, \varepsilon_{t,K-1};t,r)\right) +\gamma^{K-1} R_{\text{max},\text{avg}}\right]\nonumber\\
 & \leq \dfrac{2\gamma}{(1-\gamma)^2} \left[ \inf_{r \in [0,1]} \dfrac{1}{T} \sum_{t=1}^T \left(C^{\frac{1}{2}}_{\text{PI},\rho,\nu}(K;t,r)\mathcal{E}^{\frac{1}{2}}(\varepsilon_{t,0}, \dots, \varepsilon_{t,K-1};t,r)\right) + \gamma^{K-1} R_{\text{max},\text{avg}}\right]\nonumber\\
 & \leq \dfrac{2\gamma}{(1-\gamma)^2} \left[ \inf_{r \in [0,1]} C_{\text{PI}}^{\frac{1}{2}}(K;r) \dfrac{1}{T} \sum_{t=1}^T \left(\mathcal{E}^{\frac{1}{2}}(\varepsilon_{t,0}, \dots, \varepsilon_{t,K-1};t,r)\right) + \gamma^{K-1} R_{\text{max},\text{avg}}\right].\label{E:api_multi_farahmand}
\end{align}

Using Jensen's inequality as in the \gls{avi} scenario, we can write~(\ref{E:api_multi_farahmand}) as:

\begin{align}
 \dfrac{1}{T}\sum_{t=1}^T \lVert Q^*_t - Q_t^{\pi_K}\rVert_{1,\rho} \leq \dfrac{2\gamma}{(1-\gamma)^2} &\left[ \inf_{r \in [0,1]} C_{\text{PI}}^{\frac{1}{2}}(K;r) \mathcal{E}_{\text{avg}}^{\frac{1}{2}}(\varepsilon_{\text{avg},0}, \dots, \varepsilon_{\text{avg},K-1};r)\right.\nonumber\\
 &\left.+ \gamma^{K-1} R_{\text{max},\text{avg}}\right],
\end{align}
with $\varepsilon_{\text{avg},k} = \nicefrac{1}{T}\sum_{t=1}^T \varepsilon_{\text{t}, k}$ and $\mathcal{E}_{\text{avg}}(\varepsilon_{\text{avg},0}, \dots, \varepsilon_{\text{avg},K-1};r) = \sum^{K-1}_{k=0}\alpha_k^{2r}\varepsilon_{\text{avg},k}$.
\end{proof}

\subsection{Approximation bounds}
\begin{proof}[Proof of Theorem \ref{T:apprx}]
Let $w_1^*, \dots, w_T^*$, $h^*$ and $f_1^*,\dots,f_T^*$ be the minimizers of $\varepsilon^*_{\text{avg}}$,  then:
\begin{align}
\varepsilon_{\text{avg}}(\mathbf{\hat{w}}, \hat{h}, \mathbf{\hat{f}}) - \varepsilon^*_{\text{avg}} &= \underbrace{\left( \varepsilon_{\text{avg}}(\mathbf{\hat{w}}, \hat{h}, \mathbf{\hat{f}}) - \frac{1}{nT}\sum_{ti}\ell(\hat{f}_t(\hat{h}(\hat{w}_t(X_{ti}))), Y_{ti})\right)}_{A}\nonumber\\
&+\underbrace{\left( \frac{1}{nT}\sum_{ti}\ell(\hat{f}_t(\hat{h}(\hat{w}_t(X_{ti}))), Y_{ti}) - \frac{1}{nT}\sum_{ti}\ell(f^*_t(h^*(w^*_t(X_{ti}))), Y_{ti})\right)}_{B}\nonumber\\
&+\underbrace{\left( \frac{1}{nT}\sum_{ti}\ell(f^*_t(h^*(w^*_t(X_{ti}))), Y_{ti}) - \varepsilon^*_{\text{avg}}\right)}_{C}.\label{E:abc_bound}
\end{align}
We proceed to bound the three components individually:
\begin{itemize}
 \item $C$ can be bounded using Hoeffding's inequality, with probability $1 - \nicefrac{\delta}{2}$ by $\sqrt{\nicefrac{\ln\left(\nicefrac{2}{\delta}\right)}{(2nT)}}$, as it contains only $nT$ random variables bounded in the interval $[0,1]$;
 \item $B$ can be bounded by $0$, by definition of $\mathbf{\hat{w}}$, $\hat{h}$ and $\mathbf{\hat{f}}$, as they are the minimizers of Equation~(\ref{E:mtrl}); 
 \item the bounding of $A$ is less straightforward and is described in the following.
\end{itemize}

We define the following auxiliary function spaces:
\begin{itemize}
 \item $\mathcal{W}'=\lbrace x \in \mathcal{X} \to (w_t(x_{ti})) : (w_1, \dots, w_T) \in \mathcal{W}^T \rbrace$,
 \item $\mathcal{F}'=\left\lbrace y \in \mathbb{R}^{KTn} \to (f_t(y_{ti})) : (f_1, \dots, f_T) \in \mathcal{F}^T \right\rbrace$,
\end{itemize}

and the following auxiliary sets:
\begin{itemize}
 \item $S=\left\{(\ell(f_t(h(w_t(X_{ti}))), Y_{ti})):f \in \mathcal{F}^T, h \in \mathcal{H}, w \in \mathcal{W}^T \right\} \subseteq \mathbb{R}^{Tn}$,
 \item $S' = \mathcal{F}'(\mathcal{H}(\mathcal{W}'(\bar{\mathbf{X}}))) = \left\{(f_t(h(w_t(X_{ti})))):f \in \mathcal{F}^T, h \in \mathcal{H}, w \in \mathcal{W}^T \right\} \subseteq \mathbb{R}^{Tn}$,
 \item $S'' = \mathcal{H}(\mathcal{W}'(\bar{\mathbf{X}})) = \left\{(h(w_t(X_{ti}))): h \in \mathcal{H}, w \in \mathcal{W}^T \right\}\subseteq \mathbb{R}^{KTn}$,
\end{itemize}
which will be useful in our proof.

Using Theorem $9$ by~\cite{maurer2016benefit}, we can write:
\begin{align}
\varepsilon_{\text{avg}}(\mathbf{\hat{w}}, \hat{h}, \mathbf{\hat{f}}) - \frac{1}{nT}\sum_{ti}&\ell(\hat{f}_t(\hat{h}(\hat{w}_t(X_{ti}))), Y_{ti})\nonumber\\ &\leq \sup_{\mathbf{w} \in \mathcal{W}^T, h \in \mathcal{H}, \mathbf{f} \in \mathcal{F}^T} \left( \varepsilon_{\text{avg}}(\mathbf{w}, h, \mathbf{f}) - \frac{1}{nT}\sum_{ti}\ell(f_t(h(w_t(X_{ti}))), Y_{ti})\right)\nonumber\\
&\leq \frac{\sqrt{2\pi}G(S)}{nT} + \sqrt{\frac{9\ln(\frac{2}{\delta})}{2nT}}\label{E:bound_1},
\end{align}
then by Lipschitz property of the loss function $\ell$ and the contraction lemma Corollary $11$~\cite{maurer2016benefit}: $G(S) \leq G(S')$. By Theorem $12$ by~\cite{maurer2016benefit}, for universal constants $c_1'$ and $c_2'$:
\begin{equation}
G(S') \leq c_1'L(\mathcal{F}')G(S'') + c_2'D(S'')O(\mathcal{F}') + \min_{y \in Y} G(\mathcal{F}(y)),\label{E:G_S}
\end{equation}
where $L(\mathcal{F}')$ is the largest value for the Lipschitz constants in the function space $\mathcal{F}'$, and $D(S'')$ is the Euclidean diameter of the set $S''$.

Using Theorem $12$ by~\cite{maurer2016benefit} again, for universal constants $c_1''$ and $c_2''$:
\begin{equation}
G(S'') \leq c_1''L(\mathcal{H})G(\mathcal{W}'(\bar{\mathbf{X}})) + c_2''D(\mathcal{W}'(\bar{\mathbf{X}}))O(\mathcal{H}) + \min_{p \in P}G(\mathcal{H}(p)).\label{E:G_S'}
\end{equation}
Putting~(\ref{E:G_S}) and~(\ref{E:G_S'}) together:
\begin{align}
 G(S') \leq &\,c_1'L(\mathcal{F}')\left(c_1''L(\mathcal{H})G(\mathcal{W}'(\bar{\mathbf{X}})) + c_2''D(\mathcal{W}'(\bar{\mathbf{X}}))O(\mathcal{H}) + \min_{p \in P}G(\mathcal{H}(p))\right)\nonumber\\
 &+ c_2'D(S'')O(\mathcal{F}') + \min_{y \in Y} G(\mathcal{F}(y))\nonumber\\
 =&\,c_1'c_1''L(\mathcal{F}')L(\mathcal{H})G(\mathcal{W}'(\bar{\textbf{X}})) + c_1'c_2''L(\mathcal{F}')D(\mathcal{W}'(\bar{\textbf{X}}))O(\mathcal{H}) + c_1'L(\mathcal{F}')\min_{p \in P}G(\mathcal{H}(p))\nonumber\\
 &+ c_2'D(S'')O(\mathcal{F}') + \min_{y \in Y} G(\mathcal{F}(y)).\label{E:G_chain}
\end{align}
At this point, we have to bound the individual terms in the right hand side of~(\ref{E:G_chain}), following the same procedure proposed by~\cite{maurer2016benefit}. 

Firstly, to bound $L(\mathcal{F}')$, let $y,y'\in\mathbb{R}^{KTn}$, where $y=(y_{ti})$ with $y_{ti}\in\mathbb{R}^K$ and $y'=(y'_{ti})$ with $y'_{ti}\in\mathbb{R}^K$. We can write the following:
\begin{align}
 \lVert f(y) - f(y') \rVert^2 & = \sum_{ti}\left(f_t(y_{ti})-f_t(y'_{ti})\right)^2 \nonumber\\
 & \leq L(\mathcal{F})^2\sum_{ti}\lVert y_{ti}-y'_{ti} \rVert^2 \nonumber\\
 & = L(\mathcal{F})^2\lVert y-y' \rVert^2,
\end{align}
whence $L(\mathcal{F}') \leq L(\mathcal{F})$.

Then, we bound:
\begin{align}
 G(\mathcal{W}'(\bar{\mathbf{X}}))=\mathbb{E}\left[\sup_{\mathbf{w}\in\mathcal{W}^T}\sum_{kti}\gamma_{kti}w_{tk}(X_{ti}) \middle| X_{ti} \right] &\leq \sum_{t}\sup_{l\in\{1,\dots,T\}}\mathbb{E}\left[\sup_{w\in\mathcal{W}}\sum_{ki}\gamma_{kli}w_{k}(X_{li}) \middle| X_{li}\right] \nonumber\\
 &= T\sup_{l\in\{1,\dots,T\}} G(\mathcal{W}(\mathbf{X}_l)).\label{E:G_w}
\end{align}

Then, since it is possible to bound the Euclidean diameter using the norm of the supremum value in the set, we bound $D(S'') \leq 2 \sup_{h,\mathbf{w}}\lVert h(\mathbf{w}(\bar{\mathbf{X}}))\rVert$ and $D(\mathcal{W}'(\bar{\mathbf{X}})) \leq 2\sup_{\mathbf{w}\in\mathcal{W}^T}\lVert\mathbf{w}(\bar{\mathbf{X}})\rVert$.

Also, we bound $O(\mathcal{F}')$:
\begin{align}
\mathbb{E}\left[\sup_{g \in \mathcal{F}'}\langle\boldsymbol{\gamma}, g(y) - g(y')\rangle\right] & =  \mathbb{E}\left[ \sup_{\boldsymbol{f} \in \mathcal{F}^T}\sum_{ti}\gamma_{ti} \left(f_t(y_{ti})-f_t(y_{ti}')\right) \right] \nonumber\\
& = \sum_{t} \mathbb{E}\left[ \sup_{f \in \mathcal{F}}\sum_{i}\gamma_{i} \left(f(y_{ti})-f(y_{ti}')\right) \right] \nonumber\\
& \leq \sqrt{T} \left( \sum_{t} \mathbb{E}\left[ \sup_{f \in \mathcal{F}}\sum_{i}\gamma_{i} \left(f(y_{ti})-f(y_{ti}')\right) \right]^2 \right)^{\frac{1}{2}} \nonumber\\
& \leq \sqrt{T} \left( \sum_{t} O(\mathcal{F})^2 \sum_i \lVert y_{ti} - y_{ti}' \rVert^2 \right)^{\frac{1}{2}} \nonumber\\
& = \sqrt{T} O(\mathcal{F})\lVert y - y' \rVert,
\end{align}
whence $O(\mathcal{F}')\leq \sqrt{T} O(\mathcal{F})$.

To minimize the last term, it is possible to choose $y_0=0$, as $f(0)=0, \forall f\in\mathcal{F}$, resulting in $\min_{y \in Y} G(\mathcal{F}(y)) = G(\mathcal{F}(0)) = 0$.

Then, substituting in~(\ref{E:G_chain}), and recalling that $G(S)\leq G(S')$:
\begin{align}
 G(S) \leq &\,c_1'c_1''L(\mathcal{F})L(\mathcal{H})T\sup_{l\in\{1,\dots,T\}} G(\mathcal{W}(\mathbf{X}_l)) + 2c_1'c_2''L(\mathcal{F}) \sup_{\mathbf{w} \in \mathcal{W}^T} \lVert \mathbf{w}(\bar{\mathbf{X}})\rVert O(\mathcal{H})\nonumber\\
 &+ c_1'L(\mathcal{F})\min_{p \in P}G(\mathcal{H}(p))+ 2c_2' \sup_{h,\mathbf{w}}\lVert h(\mathbf{w}(\bar{\mathbf{X}}))\rVert\sqrt{T}O(\mathcal{F}).\label{E:gs_bound}
\end{align}
Now, the first term $A$ of~(\ref{E:abc_bound}) can be bounded substituting~(\ref{E:gs_bound}) in~(\ref{E:bound_1}):
\begin{align*}
\varepsilon_{\text{avg}}(\mathbf{\hat{w}}, \hat{h}, \mathbf{\hat{f}}) &- \frac{1}{nT}\sum_{ti}\ell(\hat{f}_t(\hat{h}(\hat{w}_t(X_{ti}))), Y_{ti})\nonumber\\
\leq& \dfrac{\sqrt{2\pi}}{nT}\Big( c_1'c_1''L(\mathcal{F})L(\mathcal{H})T\sup_{l\in\{1,\dots,T\}} G(\mathcal{W}(\mathbf{X}_l)) + 2c_1'c_2''L(\mathcal{F}) \sup_{\mathbf{w} \in \mathcal{W}^T}\lVert \mathbf{w}(\bar{\mathbf{X}})\rVert O(\mathcal{H})\nonumber\\
&+ c_1'L(\mathcal{F})\min_{p \in P}G(\mathcal{H}(p))+ 2c_2' \sup_{h,\mathbf{w}}\lVert h(\mathbf{w}(\bar{\mathbf{X}}))\rVert\sqrt{T}O(\mathcal{F})\Big) + \sqrt{\frac{9\ln(\frac{2}{\delta})}{2nT}}\nonumber\\
=&\,c_1\dfrac{L(\mathcal{F})L(\mathcal{H})\sup_{l\in\{1,\dots,T\}} G(\mathcal{W}(\mathbf{X}_l))}{n} + c_2\dfrac{\sup_\mathbf{w}\lVert \mathbf{w}(\bar{\mathbf{X}})\rVert L(\mathcal{F}) O(\mathcal{H})}{nT}\nonumber\\
&+ c_3\dfrac{L(\mathcal{F})\min_{p \in P}G(\mathcal{H}(p))}{nT} + c_4 \dfrac{\sup_{h,\mathbf{w}}\lVert h(\mathbf{w}(\bar{\mathbf{X}}))\rVert O(\mathcal{F})}{n\sqrt{T}} + \sqrt{\frac{9\ln(\frac{2}{\delta})}{2nT}}.
\end{align*}
A union bound between $A$, $B$ and $C$ of~(\ref{E:abc_bound}) completes the proof:
\begin{align*}
\varepsilon_{\text{avg}}(\mathbf{\hat{w}}, \hat{h}, \mathbf{\hat{f}}) - \varepsilon^*_{\text{avg}} \leq& 
 \,c_1\dfrac{L(\mathcal{F})L(\mathcal{H})\sup_{l\in\{1,\dots,T\}} G(\mathcal{W}(\mathbf{X}_l))}{n}\nonumber\\
&+ c_2\dfrac{\sup_\mathbf{w}\lVert \mathbf{w}(\bar{\mathbf{X}})\rVert L(\mathcal{F}) O(\mathcal{H})}{nT}\nonumber\\
&+ c_3\dfrac{L(\mathcal{F})\min_{p \in P}G(\mathcal{H}(p))}{nT}\nonumber\\
&+ c_4 \dfrac{\sup_{h,\mathbf{w}}\lVert h(\mathbf{w}(\bar{\mathbf{X}}))\rVert O(\mathcal{F})}{n\sqrt{T}}\nonumber\\
&+ \sqrt{\frac{8\ln(\frac{3}{\delta})}{nT}}.
\end{align*}

\end{proof}

\section{Additional details of empirical evaluation}\label{App:exp}

\subsection{Multi Fitted $Q$-Iteration}
We consider \textit{Car-On-Hill} problem with discount factor $0.95$ and horizon $100$. Running Adam optimizer with learning rate $0.001$ and using a mean squared loss, we train a neural network composed of $2$ shared layers of $30$ neurons each, with sigmoidal activation function, as described in~\cite{riedmiller2005neural}. We select $8$ tasks for the problem changing the mass of the car $m$ and the value of the discrete actions $a$ (Table~\ref{Ta:fqi_tasks}). Figure~\ref{F:fqi} is computed considering the first four tasks, while Figure~\ref{F:multiple-tasks} considers task $1$ in the result with $1$ task, tasks $1$ and $2$ for the result with $2$ tasks, tasks $1$, $2$, $3$, and $4$ for the result with $4$ tasks, and all the tasks for the result with $8$ tasks.
To run \gls{fqi} and \gls{mfqi}, for each task we collect transitions running an extra-tree trained following the procedure and setting in~\cite{ernst2005tree}, using an $\epsilon$-greedy policy with $\epsilon=0.1$, to obtain a small, but representative dataset. The optimal $Q$-function for each task is computed by tree-search\footnote{We follow the method described in~\cite{ernst2005tree}.} for $100$ states uniformly picked from the state space, and the $2$ discrete actions, for a total of $200$ state-action tuples.

\subsection{Multi Deep $Q$-Network}
The five problems we consider for this experiment are: \textit{Cart-Pole}, \textit{Acrobot}, \textit{Mountain-Car}, \textit{Car-On-Hill}, and \textit{Inverted-Pendulum}\footnote{The IDs of the problems in the OpenAI Gym library are: \textit{CartPole-v0}, \textit{Acrobot-v1}, and \textit{MountainCar-v0}.}. The discount factors are respectively $0.99$, $0.99$, $0.99$, $0.95$, and $0.95$. The horizons are respectively $500$, $1,000$, $1,000$, $100$, and $3,000$. The network we use consists of $80$ ReLu units for each $w_t, t \in \lbrace 1, \dots, T \rbrace$ block, with $T=5$. Then, the shared block $h$ consists of one layer with $80$ ReLu units and another one with $80$ sigmoid units. Eventually, each $f_t$ has a number of linear units equal to the number of discrete actions $a^{(t)}_{i}, i \in \lbrace 1, \dots, \# \mathcal{A}^{(t)} \rbrace$ of task $\mu_t$ which outputs the action-value $Q_t(s, a^{(t)}_i) = y_t(s, a^{(t)}_i) = f_t(h(w_t(s)), a^{(t)}_i), \forall s \in \mathcal{S}^{(t)}$. The use of sigmoid units in the second layer of $h$ is due to our choice to extract meaningful shared features bounded between $0$ and $1$ to be used as input of the last linear layer, as in most \gls{rl} approaches. In practice, we have also found that sigmoid units help to reduce task interference in multi-task networks, where instead the linear response of ReLu units cause a problematic increase in the feature values. Furthermore, the use of a bounded feature space reduces the $\sup_{h,\mathbf{w}}\lVert h(\mathbf{w}(\bar{\mathbf{X}})) \rVert$ term in the upper bound of Theorem~\ref{T:apprx}, corresponding to the upper bound of the diameter of the feature space, as shown in Appendix~\ref{App:proofs}. The initial replay memory size for each task is $100$ and the maximum size is $5,000$. We use Huber loss with Adam optimizer using learning rate $10^{-3}$ and batch size of $100$ samples for each task. The target network is updated every $100$ steps. The exploration is $\varepsilon$-greedy with $\varepsilon$ linearly decaying from $1$ to $0.01$ in the first $5,000$ steps.

\begin{table}
\begin{center}
\begin{tabular}{|c|c|c|}
\hline
Task & Mass & Action set\\
\hline
1 & 1.0 & $\lbrace -4.0;4.0\rbrace$\\
\hline
2 & 0.8 & $\lbrace -4.0;4.0\rbrace$\\
\hline
3 & 1.0 & $\lbrace -4.5;4.5\rbrace$\\
\hline
4 & 1.2 & $\lbrace -4.5;4.5\rbrace$\\
\hline
5 & 1.0 & $\lbrace -4.125;4.125\rbrace$\\
\hline
6 & 1.0 & $\lbrace -4.25;4.25\rbrace$\\
\hline
7 & 0.8 & $\lbrace -4.375;4.375\rbrace$\\
\hline
8 & 0.85 & $\lbrace -4.0;4.0\rbrace$\\
\hline
\end{tabular}
\caption{Different values of the mass of the car and available actions chosen for the Car-On-Hill tasks in the \gls{mfqi} empirical evaluation.}\label{Ta:fqi_tasks}
\end{center}
\end{table}

\subsection{Multi Deep Deterministic Policy Gradient}
The two set of problems we consider for this experiment are: one including \textit{Inverted-Pendulum}, \textit{Inverted-Double-Pendulum}, and \textit{Inverted-Pendulum-Swingup}, and another one including \textit{Hopper-Stand}, \textit{Walker-Walk}, and \textit{Half-Cheetah-Run}\footnote{The IDs of the problems in the pybullet library are: \textit{InvertedPendulumBulletEnv-v0}, \textit{InvertedDoublePendulumBulletEnv-v0}, and \textit{InvertedPendulumSwingupBulletEnv-v0}. The names of the domain and the task of the problems in the DeepMind Control Suite are: \textit{hopper-stand}, \textit{walker-walk}, and \textit{cheetah-run}.}. The discount factors are $0.99$ and the horizons are $1,000$ for all problems. The actor network is composed of $600$ ReLu units for each $w_t, t \in \lbrace 1, \dots, T \rbrace$ block, with $T=3$. The shared block $h$ has $500$ units with ReLu activation function as for \gls{mdqn}. Finally, each $f_t$ has a number of \textit{tanh} units equal to the number of dimensions of the continuous actions $a^{(t)} \in \mathcal{A}^{(t)}$ of task $\mu_t$ which outputs the policy $\pi_t(s) = y_t(s) = f_t(h(w_t(s))), \forall s \in \mathcal{S}^{(t)}$. On the other hand, the critic network consists of the same $w_t$ units of the actor, except for the use of sigmoidal units in the $h$ layer, as in \gls{mdqn}. In addition to this, the actions $a^{(t)}$ are given as input to $h$. Finally, each $f_t$ has a single linear unit $Q_t(s, a^{(t)}) = y_t(s, a^{(t)}) = f_t(h(w_t(s), a^{(t)})), \forall s \in \mathcal{S}^{(t)}$. The initial replay memory size for each task is $64$ and the maximum size is $50,000$. We use Huber loss to update the critic network and the policy gradient to update the actor network. In both cases the optimization is performed with Adam optimizer and batch size of $64$ samples for each task. The learning rate of the actor is $10^{-4}$ and the learning rate of the critic is $10^{-3}$. Moreover, we apply $\ell_2$-penalization to the critic network using a regularization coefficient of $0.01$. The target networks are updated with soft-updates using $\tau=10^{-3}$. The exploration is performed using the action computed by the actor network adding a noise generated with an Ornstein-Uhlenbeck process with $\theta = 0.15$ and $\sigma=0.2$. Note that most of these values are taken from the original \gls{ddpg} paper~\cite{lillicrap2015continuous}, which optimizes them for the single-task scenario.

\end{document}